\newcommand\blfootnote[1]{%
\begin{NoHyper}  
  \begingroup
  \renewcommand\thefootnote{}\footnote{#1}%
  \addtocounter{footnote}{-1}%
  \endgroup
  \end{NoHyper}  
}
\title{Inferring stochastic low-rank recurrent neural networks from neural data}
\author{%
  Matthijs Pals\textsuperscript{1,2}
  %\thanks{Use footnote for providing further information   about author (webpage, alternative address)---\emph{not} for acknowledging funding agencies.} \\
  %Department of Computer Science\\
  %Cranberry-Lemon University\\
  %Pittsburgh, PA 15213 \\
  %\texttt{hippo@cs.cranberry-lemon.edu} \\
  % examples of more authors
  %\And
  A Erdem Sağtekin\textsuperscript{1,2,3} %\And
  Felix Pei\textsuperscript{1,2} % \\ \\
  % Affiliation \\
  % Address \\
  % \texttt{email} \\
  %\And
  Manuel Gloeckler\textsuperscript{1,2} %\\
  % Coauthor \\
  % Affiliation \\
  % Address \\
  % \texttt{email} \\
  %\And
  Jakob H Macke\textsuperscript{1,2,4} \\ \\
   \textsuperscript{1}Machine Learning in Science, Excellence Cluster Machine Learning, University of Tübingen, Germany\\
  \textsuperscript{2}Tübingen AI Center, Tübingen, Germany\\
  \textsuperscript{3}Graduate Training Centre of Neuroscience, University of Tübingen, Germany\\ 
  \textsuperscript{4}Department Empirical Inference, Max Planck Institute for Intelligent Systems,  Tübingen, Germany \\
}
\begin{document}
\blfootnote{\texttt{\{firstname.secondname\}@uni-tuebingen.de}}

\maketitle

\begin{abstract}
A central aim in computational neuroscience is to relate the activity of large populations of neurons to an underlying dynamical system. Models of these neural dynamics should ideally be both interpretable and fit the observed data well. Low-rank recurrent neural networks (RNNs) exhibit such interpretability by having tractable dynamics. However, it is unclear how to best fit low-rank RNNs to data consisting of noisy observations of an underlying stochastic system. Here, we propose to fit stochastic low-rank RNNs with variational sequential Monte Carlo methods. We validate our method on several datasets consisting of both continuous and spiking neural data, where we obtain lower dimensional latent dynamics than current state of the art methods. Additionally, for low-rank models with piecewise-linear nonlinearities, we show how to efficiently identify all fixed points in polynomial rather than exponential cost in the number of units, making analysis of the inferred dynamics tractable for large RNNs. Our method both elucidates the dynamical systems underlying experimental recordings and provides a generative model whose trajectories match observed variability.

\end{abstract}
%textwidth: \printinunitsof{in}\prntlen{\linewidth} 
% results in 5.5 inch
\section{Introduction}

A common goal of many scientific fields is to extract the dynamical systems underlying noisy experimental observations. In particular, in neuroscience, much work is devoted to understanding the coordinated firing of neurons as being implemented through underlying dynamical systems \cite{Churchland2007,Shenoy2013,Gallego2017,Vyas2020,Barack2021}.
Recurrent neural networks (RNNs) constitute a common model-class of neural dynamics \cite{Sussillo2009,Pandarinath2018,Hess2023,Durstewitz2017,Perich2020,Valente2022,Dinc2023,Brenner2024} which can be reverse-engineered to form hypotheses about neural computations \cite{Barak2017,Sussillo2013}. 
%For certain architectures, the dynamics of RNNs become especially tractable. [This sentence sounds odd]
As a result, several recent research directions have centered on interpretable or analytically tractable RNN architectures.
In particular, RNNs with low-rank structure \cite{Seung1996,Eliasmith2002,Mastrogiuseppe2018, Schuessler2020, Beiran2021, Dubreuil2022,Pals2024} admit a direct mapping between high-dimensional population activity and an underlying low-dimensional dynamical system. RNNs with piecewise-linear activations \cite{Durstewitz2017,Curto2019,Brenner2022,Hess2023,Eisenmann2023,Morrison2024} are tractable, as they have fixed points and cycles that can be accessed analytically.

%Besides being tractable, we would like our models to capture observed neural data, including trial-to-trial variability, well. 
To serve as useful models of brain activity, it is important that models also capture the observed brain activity, including trial-to-trial variability. Many methods that fit RNNs to data are restricted to RNNs with deterministic transitions \cite{Sussillo2009,Pandarinath2018,Hess2023,Dinc2023,Valente2022,Perich2020}. It is unlikely that, in general, all variability in the data can be explained by variability in the RNNs initial state. Thus, adopting stochastic transitions is imperative. While probabilistic sequence models are used effectively in neuroscience \cite{Cunningham2014}, they have so far largely consisted of state space models without an obvious mechanistic interpretation \cite{Petreska2011,Macke2011,Linderman2017, Kim2023, Zhao2023}. 

Here, we demonstrate that we can fit large stochastic RNNs to noisy high-dimensional data. First, we show that, by combining variational sequential Monte Carlo methods \cite{Le2018,Maddison2017,Naesseth2018} with low-rank RNNs, we can efficiently fit stochastic RNNs with many units by learning the underlying low-dimensional dynamical system. The resulting RNNs are generative models of neural data that can be used to sample trajectories of arbitrary length, and also allow for conditional generation with (both time-varying and stationary) inputs. Second, we show that, for low-rank networks with piecewise-linear activation functions, the resulting dynamics can be efficiently analyzed: In particular, we show how \emph{all} fix points can be found with a polynomial cost in the number of units --- dramatically more efficient than the exponential cost in the general case.

We first validate our method using several teacher-student setups and show that we recover both the ground truth dynamics and stochasticity. We then fit our model to several real-world datasets, spanning both spiking and continuous data, where we obtain a generative model which needs lower dimensional latent dynamics than current state of the art methods. We also demonstrate how in our low-rank RNNs fixed points can be efficiently inferred --- potentially at a lower cost than approximate methods \cite{Eisenmann2023}, while additionally coming with the guarantee that \emph{all} fixed points are found.

\begin{figure}
  \centering
  \includegraphics[]{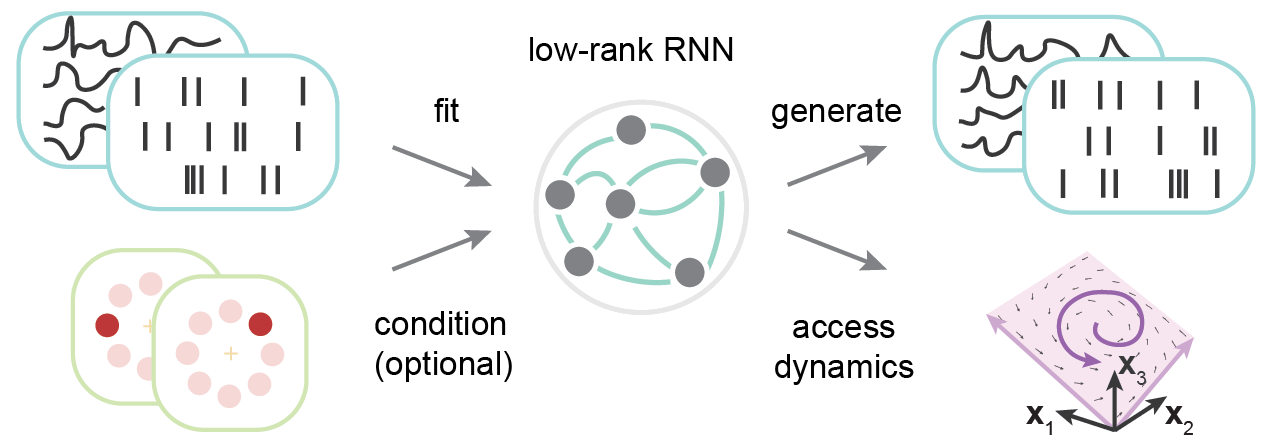}
  \caption{Our goal is to obtain generative models from which we can sample realistic neural data while having a tractable underlying dynamical system. We achieve this by fitting stochastic low-rank RNNs with variational sequential Monte Carlo.
}\label{fig:0}

\end{figure}
\section{Theory and methods}
\subsection{Low-rank RNNs}
\subsubsection{Access to the low-dimensional dynamics underlying large networks}
Our goal is to infer recurrent neural network models of the form
\begin{align}\label{eq:RNN_cont}\tau \frac{d\mathbf{x}}{dt}=-\mathbf{x}(t)+\mathbf{J}\phi(\mathbf{x}(t)) + \Gamma_\mathbf{x}\xi(t),
\end{align}
with neuron activity $\mathbf{x}(t)\in \mathbb{R}^N$, time-constant $\tau \in \mathbb{R}_{>0}$, recurrent weights $\mathbf{J}\in\mathbb{R}^{N\times N}$, element-wise nonlinearity $\phi$, an $R$ dimensional white noise process $\xi(t)$ and $\Gamma_\mathbf{x}\in\mathbb{R}^{N\times R}$. In particular, we are interested in the case where the weight matrix $\mathbf{J}$ has rank $R\le N$, i.e., it can be written as $\mathbf{J}=\mathbf{MN}^{\mathsf{T}}$, with $\mathbf{M},\mathbf{N} \in \mathbb{R}^{N\times R}$ (\cite{Mastrogiuseppe2018, Schuessler2020,Beiran2021,Dubreuil2022}). Assuming that $\mathbf{x}(0)$ lies in the subspace spanned by the columns of $\mathbf{M}$ and $\Gamma_\mathbf{x}=\mathbf{M}\Gamma_\mathbf{z}$, with $\Gamma_\mathbf{z} \in \mathbb{R}^{R\times R}$ 
, we can rewrite Eq.~\ref{eq:RNN_cont} as an equivalent $R$ dimensional system,
\begin{align}\label{eq:LRRNN_cont}\tau \frac{d\mathbf{z}}{dt}=-\mathbf{z}(t)+\mathbf{N}^\mathsf{T}\phi(\mathbf{M}\mathbf{z}(t)) + \Gamma_\mathbf{z}\xi(t), \end{align}
where we can switch between Eq.~\ref{eq:RNN_cont} and Eq.~\ref{eq:LRRNN_cont} by means of linear projection, $\mathbf{z}(t) = (\mathbf{M^\mathsf{T}M})^{-1}\mathbf{M}^\mathsf{T}\mathbf{x}(t)$ and $\mathbf{x}(t)=\mathbf{Mz}(t)$. Note that we can directly extend these equations to include input, representing, e.g., experimental stimuli or context. Even if these stimuli are time-varying, $\mathbf{x}$ will be constrained to the span of the input weights and $\mathbf{M}$. By including input to the RNN, we can use the fit models for conditional generation (see Supplement~\ref{A:conditional}).

\subsubsection{Low-rank RNNs as state space models}
We consider nonlinear latent dynamical systems with observations $\mathbf{y}_t$:
\begin{align*}
p(\mathbf{z}_{1:T},\mathbf{y}_{1:T})&=p(\mathbf{z}_1)\prod_{t=2}^Tp(\mathbf{z}_t\mid\mathbf{z}_{t-1})\prod_{t=1}^Tp(\mathbf{y}_t\mid\mathbf{z}_t),\\
p(\mathbf{z}_t\mid\mathbf{z}_{t-1})&=\mathcal{N}(F(\mathbf{z}_{t-1}),\Sigma_\mathbf{z}),\;
p(\mathbf{z}_1)=\mathcal{N}(\mu_{\mathbf{z}_1},\Sigma_{\mathbf{z}_1}),\\
p(\mathbf{y}_t\mid\mathbf{z}_t)&=G(\mathbf{z}_t), 
\end{align*}
where the transition distribution is parameterised by discretising a low-rank RNN with timestep $\Delta_t$, we have mean $F(\mathbf{z}_{t})=a\mathbf{z}_t+\tilde{\mathbf{N}}^{\mathsf{T}}\phi(\mathbf{M}\mathbf{z}_t)$,

with $a=1-\frac{\Delta_t}{\tau}$ and $\tilde{\mathbf{N}}=\frac{\Delta_t}{\tau}\mathbf{N}$, and covariance $\Sigma_\mathbf{z}$ (see Supplement~\ref{A:discrete}). The specific form of the observation function $G$, depends on the data-modality, e.g., here we use a Poisson distribution for count observations. This formulation allows one to keep the one-to-one correspondence between RNN units (or a subset of those) and recorded data neurons, (as was desired in previous work, e.g., \cite{Valente2022, Dinc2023,Perich2020,Sourmpis2023}). For example, assuming Gaussian observation noise, we can simply use that $\mathbf{x}_t=\mathbf{Mz}_t$ and define $G = \mathcal{N}(\mathbf{M}\mathbf{z}_t,\Sigma_\mathbf{y})$.

Once we learn $p(\mathbf{z}_{1:T},\mathbf{y}_{1:T})$, we can use the obtained RNN as a generative model to sample trajectories, and reverse engineer the underlying dynamics to gain insight in the data generation process. Given the sequential structure of the RNN, we can do model learning by using variational sequential Monte Carlo (also called Particle Filtering) methods \cite{Le2018,Maddison2017,Naesseth2018}.

\subsection{Model learning with variational sequential Monte Carlo}

\subsubsection{Sequential Monte Carlo}
Sequential Monte Carlo (SMC) can be used to approximate sequences of distributions, such as those generated by our RNN, with a set of $K$ trajectories of latents $\mathbf{z}_{1:T}$ (commonly called particles) \cite{Doucet2008}. As we do not have direct access to the posterior p($\mathbf{z}_{t}\mid\mathbf{y}_{1:t}$), we instead sample from a proposal distribution $r$, and adjust for the discrepancy between the proposal and target posterior distribution using importance weights. Thus, a crucial choice when doing SMC is picking the right proposal distribution $r$, from which we can sample latents conditioned on the previous latent $\mathbf{z}_{t-1}$ and observed data $\mathbf{y}_{1:T}$, or a subset of those. Given initial samples $\mathbf{z}_1^{1:K}\sim r$ and corresponding importance weights $\overline{w}_1^{1:K}$ (as defined below) SMC progresses by repeatedly executing the following steps:
\begin{align*}
&resample &a^k_{t-1}\sim\mathsf{Discrete}(a^k_{t-1}\mid\overline{w}_{t-1}^k),\\
&propose &\mathbf{z}^k_t \sim r(\mathbf{z}^k_t\mid\mathbf{y}_t,\mathbf{z}_{t-1}^{a^k_{t-1}}),\\
&reweight &w_t^k=\frac{p(\mathbf{y}_t,\mathbf{z}_t^k\mid\mathbf{z}^{a_{t-1}^k}_{t-1})}{r(\mathbf{z}_t^k\mid\mathbf{y}_t,\mathbf{z}^{a_{t-1}^k}_{t-1})},\\
\end{align*}
with $\overline{w}^k_t = \frac{w_t^k}{\sum_{j=1}^Kw_t^j}$. Here, the resampling step avoids most of the weights from concentrating on very few particles. Using SMC, we obtain, at time $t$, a filtering approximation to the posterior,
\begin{align}\label{eq:filt_post}
q_{\mathsf{filt}}(\mathbf{z}_{1:t}\mid\mathbf{y}_{1:t})=\sum_{k=1}^K\overline{w}_{t}^k\delta(\mathbf{z}_{1:t}^k).
\end{align} 
The unnormalised weights give an unbiased estimate to the marginal likelihood,
\begin{align}\label{eq:L_y}
\hat{p}(\mathbf{y}_{1:T}) =\prod_{t=1}^T\frac{1}{K}\sum_{k=1}^Kw_t^k.
\end{align}
We now detail how we pick the proposal distribution $r$. For linear Gaussian observations $G=\mathcal{N}(\mathbf{Wz}_t,\Sigma_\mathbf{y})$, we set $r(\mathbf{z}_t\mid\mathbf{y}_t,\mathbf{z}_{t-1})=p(\mathbf{z}_t\mid\mathbf{y}_t,\mathbf{z}_{t-1})$, as this is available in closed form and is optimal (in the sense that it minimises the variance of the importance weights \cite{Doucet2008})
\begin{align}\label{eq:opt_proposal}
    r(\mathbf{z}_t\mid\mathbf{y}_t,\mathbf{z}_{t-1}) = \mathcal{N}((\mathbf{I}-\mathbf{K}\mathbf{W})F(\mathbf{z}_{t-1})+\mathbf{Ky}_t,\Lambda_\mathbf{z} )
\end{align}
with $\mathbf{K}$ the Kalman Gain: $\mathbf{K}=\Lambda_\mathbf{z}\mathbf{W}^\mathsf{T}\Sigma_\mathbf{y}^{-1}$, and $~{\Lambda_\mathbf{z} = (\Sigma_\mathbf{z}^{-1}+\mathbf{W}^\mathsf{T}\Sigma_\mathbf{y}^{-1}\mathbf{W})^{-1}}$ (or equivalently $~{\mathbf{K}=\Sigma_\mathbf{z}\mathbf{W}^\mathsf{T}(\mathbf{W}\Sigma_\mathbf{z}\mathbf{W}^\mathsf{T}+\Sigma_\mathbf{y})^{-1}}$, and $\Lambda_\mathbf{z} = (\mathbf{I}-\mathbf{K}\mathbf{W})\Sigma_{\mathbf{z}}$). For non-linear observations, we can not invert the observation process in closed form, so we instead jointly optimize a parameterized `encoding' distribution $e(\mathbf{z}_t\mid\mathbf{y}_{t-t':t}$) (as in a variational autoencoder \cite{Kingma2014}). In particular, we assume $e$ to be a multivariate normal with diagonal covariance, which we parameterize by a causal convolutional neural network, such that each latent is conditioned on the $t'$ latest observations (although sometimes non-causal encoders can be advantageous, see Supplement~\ref{A:NLB}). We then use the following proposal:
\begin{align}\label{eq:enc_proposal}
r(\mathbf{z}_t\mid\mathbf{z}_{t-1},\mathbf{y}_{t-t':t})\propto e(\mathbf{z}_t\mid\mathbf{y}_{t-t':t})p(\mathbf{z}_t\mid\mathbf{z}_{t-1}),
\end{align}
where we now also assume $p(\mathbf{z}_t\mid\mathbf{z}_{t-1})$ has a diagonal covariance matrix. %Note that even if $r$ is unable to approximate the true posterior well, with enough particles $k$, $q_{\mathsf{smc}}$ can come arbitrarily close to the true posterior. %took at as q_smc is not yet defined here

\subsubsection{Relationship to Generalised Teacher Forcing}
In our approach, the mean of the proposal distribution at time $t$ is a linear combination between the RNN predicted state $F(\mathbf{z}_{t-1})$ and a data-inferred state $\hat{\mathbf{z}}_{t}$. 
A recent study obtained state-of-the art results for reconstructing dynamical systems by fitting deterministic RNNs with a method called Generalised Teacher Forcing (GTF), which similarly linearly interpolates between a data-inferred and an RNN predicted state at every time-step \cite{Hess2023}; the model propagates forward in time as $\mathbf{z}_{t} = (1-\alpha)F(\mathbf{z}_{t-1})+\alpha \hat{\mathbf{z}}_{t}$. 
\citet{Hess2023} showed that by choosing the appropriate $\alpha$, one can completely avoid exploding gradients, while still allowing backpropagation through time, and thus obtaining long-term stable solutions \cite{Doya1993}. The optimal $\alpha$ can be picked based on the maximum Lyaponuv exponent of the system (a measure of how fast trajectories diverge in a chaotic system). 

By including the RNN in the proposal distribution, we similarly to GTF allow backpropagation through time through the sampled trajectories. The linear combination is given by
$~{\alpha = (\Sigma_\mathbf{z}^{-1}+\mathbf{W}^\mathsf{T}\Sigma_\mathbf{y}^{-1}\mathbf{W})^{-1}\mathbf{W}^\mathsf{T}\Sigma_\mathbf{y}^{-1}\mathbf{W}}$ in Eq.~\ref{eq:opt_proposal}, and similarly in Eq.~\ref{eq:enc_proposal} by $\alpha =(\Sigma_\mathbf{z}^{-1}+\Sigma_{\hat{\mathbf{z}}_t}^{-1})^{-1}\Sigma_{\hat{\mathbf{z}}_t}^{-1}$, where $\Sigma_{\hat{\mathbf{z}}_t}$ is the predicted variance of the encoding network. Thus, instead of interpolating based on an estimate of how chaotic the system is, our approach combines RNN and data inferred states adaptively (every time step, if Eq.~\ref{eq:enc_proposal} is used) based on how relatively noisy the transition distribution is with respect to the data-inferred states at time $t$, analogous to, e.g., the gain of a Kalman filter. In the formulation of GTF of \citet{Hess2023}, an invertable observation model is required. By learning an encoder that predicts a distribution over latents, our method naturally extends to models with non-invertable (e.g., Poisson) observations. 

\subsubsection{Variational objective}

We can fit our RNNs to data by using SMC to specify a variational objective \cite{Le2018,Maddison2017,Naesseth2018}. In variational inference, we specify a family of parameterized distributions $Q$, and optimize those parameters such that a divergence (usually the $\mathsf{KL}$ divergence) between the variational distribution $q(\mathbf{z}_{1:T}) \in Q$ and the true posterior $p(\mathbf{z}_{1:T}\mid\mathbf{y}_{1:T})$ is minimized. We do this by maximising a lower bound ($\mathsf{ELBO}$) to the log likelihood $p(\mathbf{y}_{1:T})$. In particular, we can use Eq.~\ref{eq:L_y} to specify the $\mathsf{ELBO}$ objective\cite{Le2018,Maddison2017,Naesseth2018}
\begin{align}\label{eq:ELBO}
\mathcal{L}=\mathbb{E}_{q_{\mathsf{smc}}(\mathbf{z}^{1:K}_{1:T},a^{1:K}_{1:T-1}\mid\mathbf{y}_{1:T})}[\log \hat{p}(\mathbf{y}_{1:T})],
\end{align}
with $q_\mathsf{smc}$ the sampling distribution: \\
$q_{\mathsf{smc}}(\mathbf{z}^{1:K}_{1:T},a^{1:K}_{1:T-1}\mid\mathbf{y}_{1:T})=\prod_{k=1}^Kr(\mathbf{z}^k_1\mid\mathbf{y}_1)\prod_{k=1}^K\prod_{t=2}^Tr(\mathbf{z}^k_t\mid\mathbf{z}^{a^k_{t-1}}_{t-1}\mathbf{y}_t)\mathsf{Discrete}(a^k_{t-1}\mid\overline{w}_{t-1}^k).$

During each training iteration, we run SMC, using the closed form optimal proposal (Eq.~\ref{eq:opt_proposal}) if observations are linear Gaussian, otherwise the proposal includes a parameterised encoder (Eq.~\ref{eq:enc_proposal}). We can then use the resulting unnormalised importance weights (Eq.~\ref{eq:L_y}) to estimate the $\mathsf{ELBO}$, which we maximise with backpropagation (through time). As suggested in previous studies \cite{Le2018,Maddison2017,Naesseth2018, zenn2023}, we use biased gradients during optimization by dropping high-variance terms arising from the resampling. 

\subsection{Finding fixed 
points in piecewise-linear low-rank RNNs}
After having learned our model, we can gain insight into the mechanisms underlying the data generation process by reverse engineering the learned dynamics \cite{Sussillo2013}, e.g., by calculating their fixed points. Here, we show that the fixed points can be found analytically and efficiently for low-rank networks with piecewise-linear activation functions. This class of activation functions $\phi(\mathbf{x}_i)=\sum_d^D\mathbf{b}_i^{(d)}\mathsf{max}(\mathbf{x}_i-\mathbf{h}_i^{(d)},0)$ includes, e.g., the standard ReLU ($\phi(\mathbf{x}_i)=\mathsf{max}(\mathbf{x}_i-\mathbf{h}_i,0)$) or the `clipped' variant ($\phi(\mathbf{x}_i)=\mathsf{max}(\mathbf{x}_i+\mathbf{h}_i,0)-\mathsf{max}(\mathbf{x}_i,0)$) \cite{Hess2023} which we used in all experiments with real-world data here.

Naively, the cost of finding all fixed points piecewise-linear networks scales \emph{exponentially} with the number of units in the networks: we would have to solve $(D+1)^N$ systems of $N$ equations \cite{Durstewitz2017,Brenner2022}. If networks are low rank, it is straightforward to show that we can reduce this cost to solving $(D+1)^N$ systems of $R$ equations (See Supplement~\ref{A:Proposition_1}). In addition, however, we show that the computational cost can be greatly reduced further: One can find \emph{all} fixed points in a cost that is \emph{polynomial} instead of \emph{exponential} in the number of units:

\textbf{Proposition 1.} Assume Eq.~\ref{eq:RNN_cont}, with $\mathbf{J}$ of rank $R$ and piecewise-linear activations $\phi$. % $\phi(\mathbf{x}_i)=\sum_d^D\mathbf{b}_i^{(d)}\mathsf{max}(\mathbf{x}_i-\mathbf{h}_i^{(d)},0)$.
For fixed rank $R$ and fixed number of basis functions $D$, we can find all fixed points in the absence of noise, that is all $\mathbf{x}$ for which $\frac{d\mathbf{x}}{dt}=0$, by solving at most $\mathcal{O}(N^R)$ linear systems of $R$ equations.
\begin{wrapfigure}{r}{0.16\textwidth}
  \centering
  \includegraphics[]{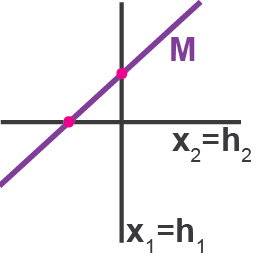}
  \caption{Proof sketch.}
  \label{fig:sketch_simple}
\end{wrapfigure}\textit{Proof.} See Supplement~\ref{A:Proposition_1}.

\textit{Sketch.} Assuming $D = 1$, activations $\phi =\max(0, \mathbf{x}_i - \mathbf{h}_i)$; $N$ units will partition the full phase space into $2^N$ regions in which the dynamics are linear (2 units, 4 regions in Fig.~\ref{fig:sketch_simple}). We can thus, in principle, solve for all fixed points by solving all corresponding linear systems of equations \cite{Durstewitz2017,Brenner2022}.
If dynamics are confined to the $R$-dimensional subspace spanned by the columns of $\mathbf{M}$, only a subset of the linear regions (3 in Fig.~\ref{fig:sketch_simple}) can be reached. Each unit partitions the space spanned by the columns of $\mathbf{M}$ with a hyperplane (pink points in Fig.~\ref{fig:sketch_simple}). The amount of linear regions in $\mathbf{M}$, becomes equivalent to `how many regions can we create in $R$-dimensional space with $N$ hyperplanes?' Using Zaslavsky's theorem \cite{Zaslavsky1975}, we can show that this at most $\sum^R_{r=0}{N \choose r}\in\mathcal{O}(N^R)$ (for fixed $R$).

\section{Empirical Results}

\subsection{RNNs recover ground truth dynamics in student-teacher setups}

We validated our method using several student-teacher setups (Fig.~\ref{fig:1}; additional statistics in Fig.~S\ref{fig:S_pw}). We first trained a `teacher' RNN, with the weight matrix constrained to rank 2, to oscillate. We then simulated multiple trajectories with a high level of stochasticity in the latent dynamics (Fig.~\ref{fig:1}\textbf{a}, top left) and additional additive Gaussian observation noise (Fig.~\ref{fig:1}\textbf{a}, top right) on the observed neuron activity ($\mathbf{y}_i \sim \mathcal{N}(\mathbf{x}_i, \sigma_y^2$), with $\mathbf{x}=\mathbf{Mz}$). A second `student' RNN was then fit to the data drawn from the teacher, and both recovered the true latent dynamical system, as well as the right level of stochasticity (Fig.~\ref{fig:1}\textbf{a}, bottom; Fig.~\ref{fig:1}\textbf{d}).

We also verified that we can obtain covariance matrices $\Sigma_\mathbf{z}$ that are numerically close to the ground truth, for teacher networks with various levels of noise (Fig.~S\ref{fig:S_noise_comps}). When using the bootstrap proposal (i.e., sampling from the prior; $r = p(\mathbf{z}_t\mid\mathbf{z}_{t-1})$), or too few particles, the right level of stochasticity is not obtained, indicating that the use of multiple particles and a proposal that conditions on observed data is indeed beneficial. 

Given that neurons emit action potentials, which are commonly approximated as discrete events, we repeated the initial teacher-student experiment with Poisson observations generated according to $\mathbf{y}_i \sim\mathsf{Pois}(\mathsf{softplus}(\mathbf{w}_i\mathbf{x}_i-\mathbf{b}_i))$. The student RNN again recovers the oscillatory latent dynamics. Note that because of the affine transformation in the observation model, the inferred dynamics can be scaled and translated with respect to the teacher model. To verify that samples from our inferred model follow the same distribution as samples from the teacher model, we computed several statistics, which all show a close match (Fig.~\ref{fig:1}\textbf{e}; Fig.~S\ref{fig:S_pw}).

In our final teacher-student setups, we verified the ability to recover dynamics when there are known stimuli or contexts. In particular, we trained a rank-2 RNN on a task where, at each trial, it receives a transient pulse input corresponding to a particular angle $\theta$ (given as $\sin(\theta),\cos(\theta)$), and is asked to provide output matching the input after stimulus offset. The teacher RNN learns to perform the task by using an approximate ring attractor - which the student RNN accurately infers (Fig.~\ref{fig:1}\textbf{c}). Here, we inferred all fixed points by making use of Preposition 1. To demonstrate that our method also works when inputs are strongly time-varying, we included an additional setup where the teacher network was asked to report the sign of the mean of a noisy stimulus (Fig.~S\ref{fig:S_non-stationary}).

\begin{figure}[H]
  \centering
  \includegraphics[]{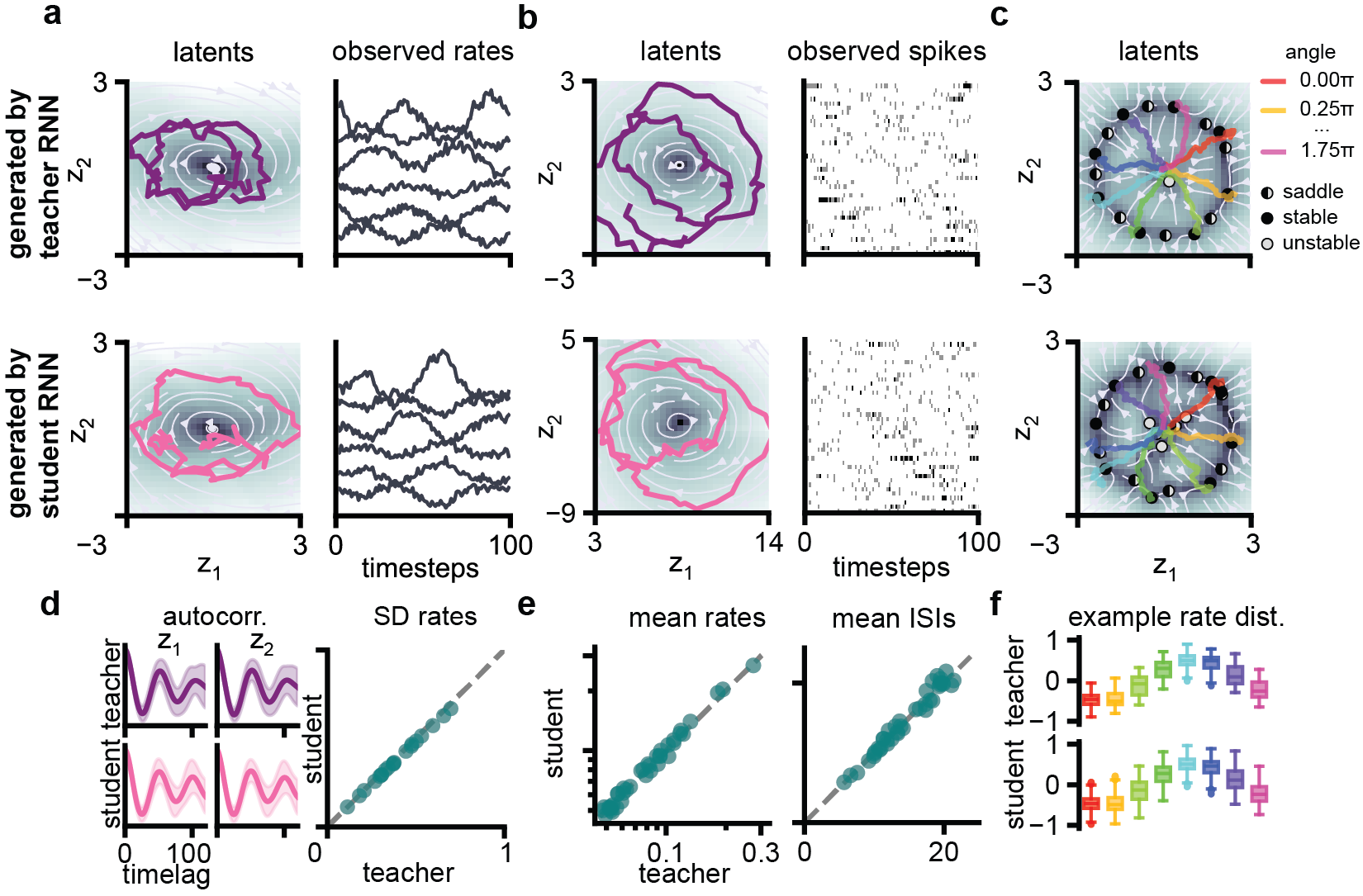}
  \caption{RNNs recover dynamics in teacher-student setups.
 \textbf{a)} Example ground truth latent trajectory and phase plane of low-rank RNN trained to oscillate (top left) and noisy observations of neuron activity (top right; 6/20 shown). A second low-rank RNN trained on the activity of the first recovers ground truth dynamics.
 \textbf{b)} Same set-up, but with Poisson observations. \textbf{c)} The teacher network was trained on a task where it has to provide an output corresponding to 8 different angles depending on an input cue.  The student network, when given the same input during fitting, recovers the approximate ring attractor. \textbf{d)} Mean ($\pm1$SD) autocorrelation of the latents of the models from panel \textbf{a}, show the oscillation frequency is captured, as well as the decorrelation due to recurrent noise. The scale of the observed rates also agrees between student and teacher. \textbf{e)} Mean rates and ISI between student and teacher units of panel \textbf{b} match. \textbf{f)} Example rate distribution of one unit of the teacher and student RNN (of panel \textbf{c}), after onset of the 8 different stimuli. 
}\label{fig:1}
\end{figure}

\subsection{Stochasticity allows recovering low-dimensional latents underlying EEG data}
\begin{wrapfigure}{r}{0.5\textwidth}
  \centering
  \includegraphics[]{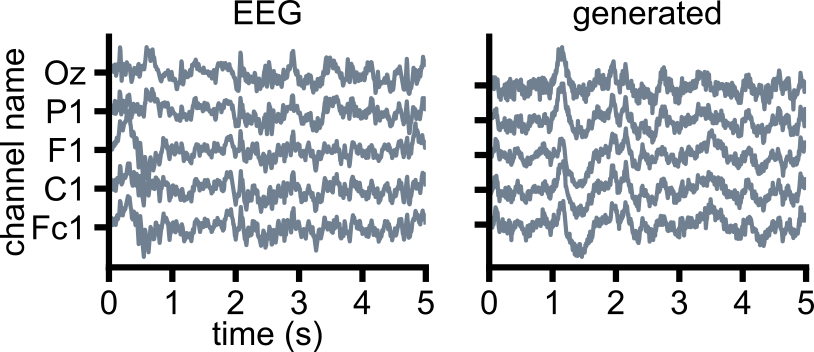}
  \caption{Example ground truth EEG \cite{Schalk2004,Moody2000} and (unconditionally) generated traces by our model. Shown are 5/64 EEG channels.}
  \label{fig:fig_eeg}
\end{wrapfigure}

After validating our model on a toy example, we went on to several challenging real-world datasets. We first used an EEG dataset \cite{Schalk2004,Moody2000} with 64 channels containing one minute of continuous data sampled at 160 Hz (Fig.~\ref{fig:fig_eeg}). This dataset was recently used in a study where generalized teacher forcing (GTF) was used to fit deterministic RNNs with low-rank structure \cite{Hess2023}. The GTF method obtains state-of-the-art results on several dynamical systems reconstruction tasks. It outperformed SINDy \cite{Brunton2016}, neural differential equations \cite{Chen2018}, Long-Expressive-Memory \cite{Rusch2022}, and other methods, while using a smaller latent dynamical system.

Here we show that using a stochastic RNN with SMC instead of a deterministic RNN with GTF, we can decrease the latent dimensionality even further, from 16 to just 3 latents, while matching the original reconstruction accuracy (Table~\ref{tab:SOTA_comparison}). We hypothesize this is because the data can be well explained by stochastic transitions with simple underlying dynamics as opposed to complex deterministic chaos.

\begin{table}[H]
\caption{Lower dimensional latent dynamics than SOTA at same sample quality. We report median $\pm$ median absolute deviation over $20$ independent training runs, `dim' refers to the dimensionality of the model's underlying dynamics and $\lvert \theta \rvert$ denotes the total number of \textit{trainable} parameters. Values for GTF taken from \citet{Hess2023}.}
\centering
{
\begin{tabular}{l l c c c c}
        \toprule
        Dataset	&	Method	&	$D_{\textrm{stsp}}$ $\downarrow$ & $D_H$ $\downarrow$  &   dim &
        $\lvert \theta \rvert$ \\
        \midrule
        \multirow{3}{4em}{EEG (64d)}
        %\rowcolor{Gray}
        &    GTF \cite{Hess2023}       & $2.1\pm0.2$ & $0.11\pm0.01$ & $16$ & $17952$  \\
        &    adaptive GTF \cite{Hess2023}       & $2.4 \pm 0.2$ & $0.13 \pm 0.01$ &  $16$ & $17952$  \\
        &    SMC (ours)     & $2.1 \pm 0.1$ & $0.11\pm 0.01$ &  $3$ & $3920$  \\
      
        \bottomrule
\end{tabular}
}
\label{tab:SOTA_comparison}
\end{table}
We evaluated samples from our RNN with two measures which were used in previous work \cite{Hess2023}, one KL divergence-based measure between the states ($D_{\mathsf{stsp}}$), and one measure over time, based on the power spectra of generated and inferred dynamics ($D_H$; see Supplement~\ref{EEG_Eval}). Unlike \citet{Hess2023}, who applied smoothing, we optimized our models directly on the raw EEG data. We also fit stochastic full-rank RNNs with variational SMC, however these models tend to have worse performance on this task, while also being less interpretable (Fig.~S\ref{fig:S_EEG_FR}).

\subsection{Interpretable latent dynamics underlying spikes recorded from rat hippocampus}

\begin{figure}[h]
  \centering
  \includegraphics[width=13.75cm]{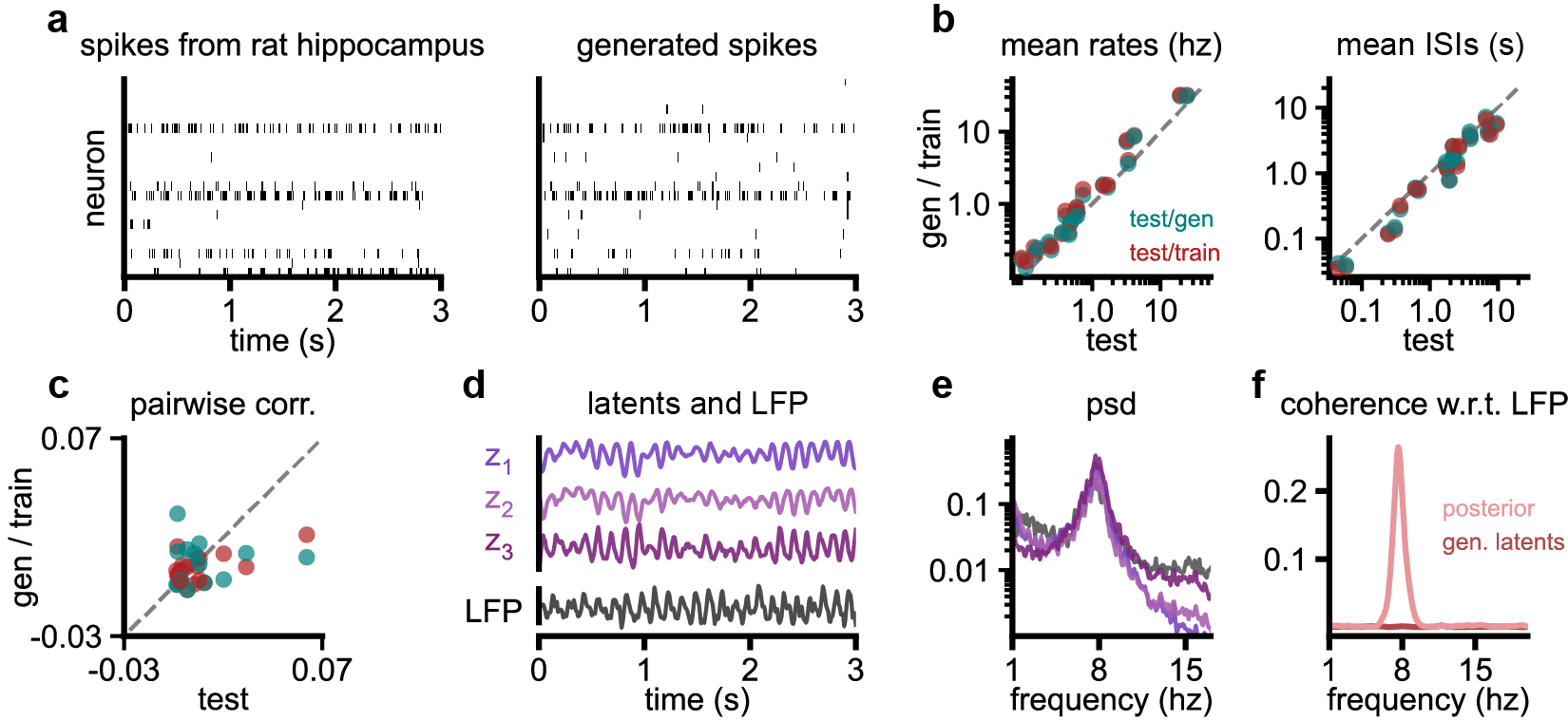}
  %\vspace{-.5cm}
  \caption{RNNs reproduce the stationary distribution of spiking data. \textbf{a)} We fit a rank-3 RNN to spike data recorded from rat hippocampus \cite{Mizuseki2009, Mizuseki2009b} (left), and generate new samples from the RNN (right). \textbf{b)} Single neuron statistics. Mean rates and means of interspike interval (ISI) distributions of a long trajectory of data generated by the RNN (gen) match those of a held-out set of data (test). As a reference we additionally computed the same statistics between the train and test set. \textbf{c)} Population level statistics. We plot the pairwise correlations between all neurons for generated data against the pairwise correlations in the test data. \textbf{d)} The corresponding latents generated by running the RNN look visually similar to the local field potential (LFP). \textbf{e)} The peak in the power spectrum matches between latents and LFP. \textbf{f)} The posterior latents show coherence with the LFP. As a reference, we compute the coherence between the LFP and the latents generated by the RNN.}\label{fig:hpc2}
\end{figure}

We next investigated how well our model can capture the distribution of non-continuous time series. In particular, we used publicly available electrophysiological recordings from the hippocampus of rats running to drops of water or pieces of food \cite{Mizuseki2009, Mizuseki2009b}. We binned the spiking data into 10ms bins and fit a rank-3 RNN to $\sim$850 s of data. Samples generated by running the fit RNN autonomously closely matched the statistics of the recordings (Fig.~\ref{fig:hpc2}\textbf{a}-\textbf{c}). Previous investigations into this dataset have examined the relationship between spikes and theta (5-10 Hz) oscillations in the local field potential (\cite{Mizuseki2009}), and found that units were locked to the LFP rhythm, with the relative phase depending on the subregions from which the units were recorded. The latents generated by the RNN are visually similar to the average local field potential (Fig.~\ref{fig:hpc2}\textbf{d}) and match its power spectrum (Fig.~\ref{fig:hpc2}\textbf{e}). While the model was solely trained on the spikes, the posterior latents (Eq.~\ref{eq:filt_post}) have a clear phase relationship with the LFP, as evidenced by a high coherence between the posterior latents and LFP. In contrast, and as expected, latents from running the RNN are not correlated with the LFP (Fig.~\ref{fig:hpc2}\textbf{f}). The correspondence between generated latents and LFP was absent when we use a related method for fitting RNNs (with deterministic transitions) to neural data (LFADS \cite{Pandarinath2018}; Fig.~S\ref{fig:S_LFADS}). Using the bootstrap proposal also led to lower-quality samples (Fig.~S\ref{fig:S_hpc2_supp}).

 %   was high relative to the latents generated from running the RNN autonomously.
 \begin{wrapfigure}[17]{r}{0.3\textwidth}
  \centering
  \includegraphics[]{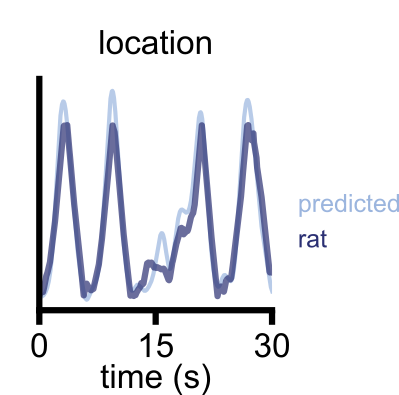}
  \caption{Posterior latents of our model (fit solely spikes) can be used to predict rat position.}\label{fig:fig4}
\end{wrapfigure} 
Units in rat hippocampus have been shown to code for position, e.g., through place cells \cite{OKeefe1976}, which tend to fire if the animal is at a specific location. To further investigate how well we can model recordings from the hippocampus, we fit a rank-4 RNN to an additional set of recordings of rats running on a linear track \cite{Grosmark2016,Chen2016,Grosmark2016b} (Fig.~S\ref{fig:S_hpc11_supp}). As in \citet{zhou2020}, we first focus only on the spikes recorded while the rat is moving, which we bin into 25 ms bins. The RNN again accurately reconstructs the distribution of spikes and again has latent oscillations. Here the frequency at which power peaks is slightly higher than that of the LFP, potentially related to phase precession (\cite{OKeefe1993}). While solely trained on spikes, the posterior latents also allowed us to predict the position of the rats with reasonable accuracy ($R^2 = 0.79\pm 0.05$ mean $\pm$ SD, $N = 4$ RNNs; Fig.~\ref{fig:fig4}). We also fit rank-12 RNNs to around 15 minutes of recording (again with 25 ms bins), which includes long intermediate periods where the rat is stationary. Here our generative model learns to have higher theta power during running bouts, in line with the data (Fig.~S\ref{fig:S_hpc11_supp_full}).

\subsection{Extracting stimulus-conditioned dynamics in monkey reaching task}
\begin{figure}[h]
  \centering
  \includegraphics[width=1.0\textwidth]{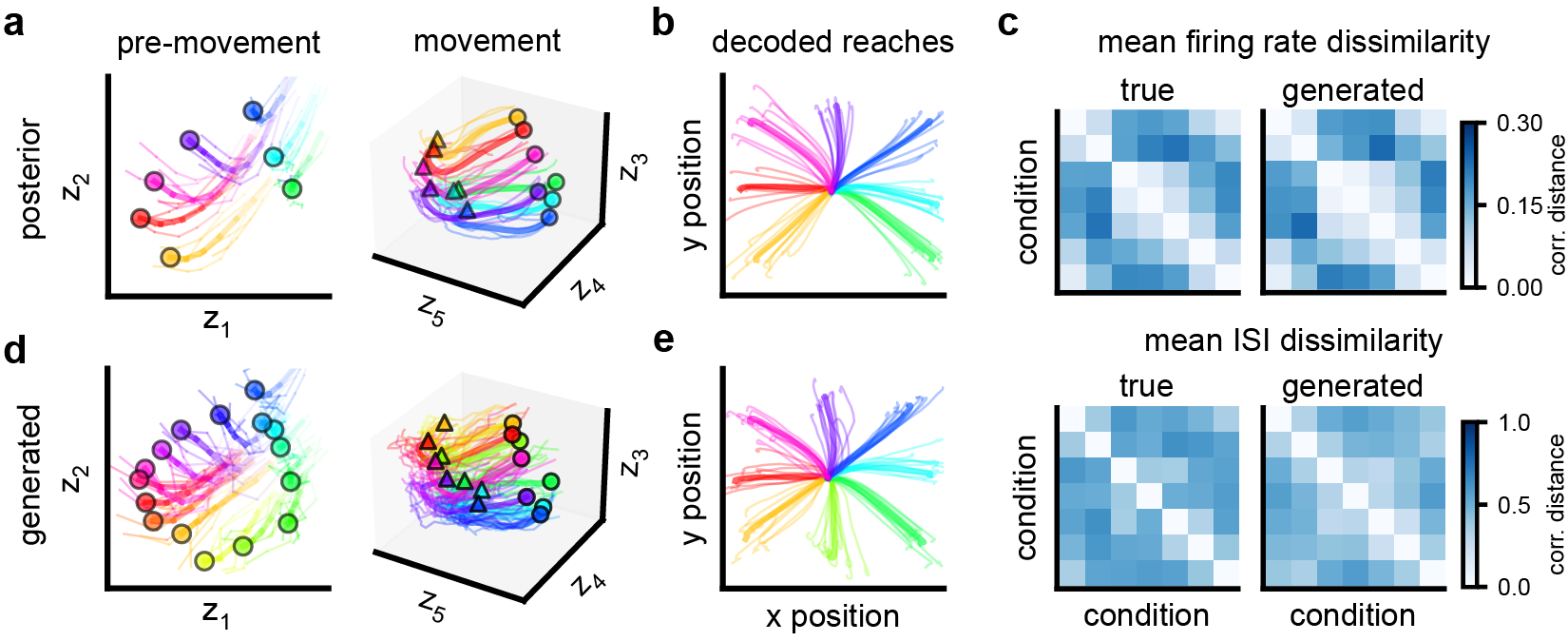}
  \caption{Inferred and generated dynamics from the model fit to macaque spiking activity during a reaching task. \textbf{a)} Latent states inferred from the macaque spiking data prior to movement initiation (`pre-movement') and during movement execution (`movement'), colored by the intended reach target. \textbf{b)} Reach trajectories decoded from model-inferred neural activity. \textbf{c)} Dissimilarity matrices computed across the seven conditions (i.e., the seven colors in \textbf{a}, \textbf{b)} for per-neuron mean firing rate and ISI. We generate neural activity from the model by providing the same conditioning stimuli as in the real data. Then, for each statistic, we compute and show the correlation distance between conditions in the real data (left) and model-generated data (right). \textbf{d}, \textbf{e)} Same as \textbf{a}, \textbf{b}, but with latent activity and behavioral predictions generated from the model with conditioning inputs including directions not seen in the real data (e.g., lime green). For clarity, we show only a subset of conditions in the decoded reaches.
  }\label{fig:fig5}
\end{figure}
\newpage
We further investigated how well we can recover stimulus-conditioned dynamics. We applied our method to spiking activity recorded from the motor and premotor cortices of a macaque performing a delayed reaching task. This type of data has been popular for investigating neural dynamics underlying the control of movement \cite{Shenoy2013,Gallego2017} and evaluating neuroscientific latent variable models \cite{Santhanam2009,Pandarinath2018,PeiYe2021NeuralLatents}. We first validated the ability of our method to obtain a sensible posterior by evaluating it on the Neural Latents Benchmark \cite{PeiYe2021NeuralLatents} (Supplement~\ref{A:NLB}, Table ~S\ref{tab:NLB_maze}). 

We then went on to a set-up where we explicitly conditioned our model on external context. For simplicity, we constrained our experiment to trials with straight reach trajectories in the data. We fit a rank-5 model to these data while conditioning the RNN dynamics on the target position by providing the target position as input. Our model was able to infer single-trial latent dynamics and neuron firing rates that predict reach velocity with high accuracy at lower latent dimensionalities than models without inputs (Fig.~\ref{fig:fig5}\textbf{b}, $R^2 = 0.90$ for this model, see Table~S\ref{tab:maze_conditioning} for additional statistics). 

We examined the posterior latents inferred by the model and found that our model recovers structured and interpretable latent dynamics. Before movement onset, latent states corresponded to the intended reach targets, which were near the edges of a rectangular screen (Fig.~\ref{fig:fig5}\textbf{a}, left), in line with \cite{Santhanam2009}. During the movement period, the latents followed parallel curved trajectories that preserve target information (Fig.~\ref{fig:fig5}\textbf{a}, right) and can be decoded to predict monkey reach behavior (Fig.~\ref{fig:fig5}\textbf{b)}.

We then generated neural data from the RNN conditioned on stimulus input. Again, the distribution of spikes is well-captured (Fig.~S\ref{fig:S_ReachSpikeStats}). We additionally evaluated whether the model faithfully captures differences in spiking statistics across the seven reach directions, finding reasonable correspondence in dissimilarities between conditions in the generated and the real data (Fig.~\ref{fig:fig5}\textbf{c}). Finally, we simulated our trained RNN with conditioning inputs, including reach directions not present in the data, and found that the structured latent space recovered by the model enables realistic generalization to unseen reach conditions (Fig.~\ref{fig:fig5}\textbf{d}, \textbf{e}, lime green condition).

\subsection{Searching for fixed points}
 \begin{wrapfigure}[24]{r}{0.4\textwidth}
  \centering
  \includegraphics{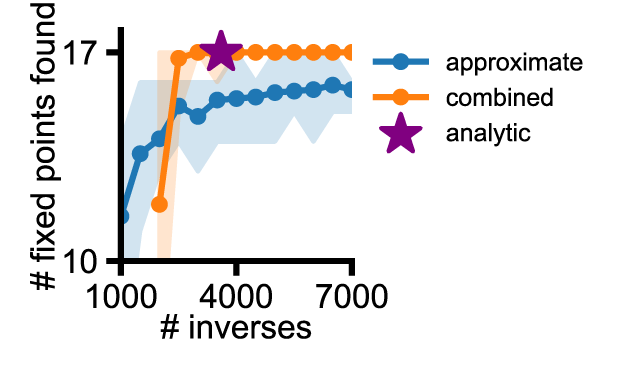}
  \caption{Comparison of our analytic method (star) and the approximate method proposed in \citet{Eisenmann2023} (blue) for finding the fixed points of the teacher RNN in Fig.~\ref{fig:1}\textbf{c}. We can also use Proposition 1 to constrain the search space of the approximate method (orange). Error bars denote the minimum and maximum amount of fixed points found over 20 independent runs of the algorithm.}\label{fig:fig_FP}
\end{wrapfigure} 

In Proposition 1, we derived a bound on the number of systems of equations one has to solve in order to find \emph{all} fixed points in piecewise-linear low-rank RNNs. Recently, an approximate algorithm for finding fixed points in piecewise-linear networks was proposed \cite{Eisenmann2023}. Here, we perform an exploration into how this compares to our analytic method by searching for fixed points of the RNN in Fig.~\ref{fig:1}\textbf{c} (top). For the same number of matrix inverses computed by our analytic method, the approximate method generally does not find all 17 fixed points (Fig.~\ref{fig:fig_FP}). We note, however, that (unlike ours) the convergence of the approximate method depends on the dynamics of the RNN, and as a result, there are theoretical scenarios where the approximate method can be shown to be faster. Yet we empirically also found scenarios where the approximate methods failed to converge within the time-frame of our experiments (Fig.~S\ref{fig:S_FPs}).

Our analytic method relies on the insight that only a subset of all linear subregions formed by the piecewise-linear activations can be reached in low-rank networks. For networks with moderate rank, the cost of searching through all of the subregions might still be too high. We can, however, hugely reduce the search space of the approximate method \cite{Eisenmann2023} (from $(D+1)^N$ to $\sum^R_{r=0}D^r{N \choose r}$), at an upfront cost (Supplement~\ref{A:FPs}; orange line in Fig.~\ref{fig:fig_FP}).
\newpage
\section{Discussion}

Here we proposed to fit low-rank RNNs to neural data using variational sequential Monte Carlo. The resulting RNNs are generative models with tractable underlying dynamics, from which we can sample long, stable trajectories of realistic data. We validated our method on several teacher-student setups and demonstrated the effectiveness of our method on multiple challenging real-world examples, where we generally needed a latent dynamical system with very few dimensions to accurately model the data. Besides our empirical results, we obtained a theoretical bound on the cost of finding fixed points for RNNs with piecewise-linear activation functions when they are also low-rank.

Adding stochastic transitions to low-rank RNNs can potentially hugely reduce the rank required to accurately model observed data, as demonstrated here with a network fit to EEG data where we could reduce the dimensionality from 16 to just 3. While many methods that fit RNNs to neural data (e.g., \cite{Sussillo2009,Pandarinath2018,Hess2023,Dinc2023,Valente2022,Perich2020}) assume deterministic transitions, there is a rich literature concentrating on probabilistic sequence models in neuroscience (e.g., \cite{Petreska2011,Macke2011,Linderman2017, Kim2023, Zhao2023}). In particular, a recent work termed FINDR \cite{Kim2023} uses variational inference (but not SMC), to similarly find very low-dimensional dynamical systems underlying neural data. These stochastic dynamical systems were parameterized using neural differential equations \cite{Chen2018}. While Eq.~\ref{eq:LRRNN_cont} can be seen as a neural differential equation with one hidden layer, our particular formulation allows us to find its fixed-points effectively and map back to a regular, mechanistically interpretable RNN (Eq.~\ref{eq:RNN_cont}) after fitting, which enables additional investigations into neural population dynamics \cite{Dubreuil2022,Mastrogiuseppe2018,Beiran2021,Pals2024}.

We here — similar to FINDR (and \cite{Versteeg2024}) — did not use the adjoint method as is typical in the neural differential equation literature, but rather a simple Euler-Maruyama discretisation scheme and standard backpropagation through time. However, one could investigate how we can integrate our approach with variational approaches that use adjoint methods when fitting latent neural SDEs \cite{Li2020, Deng2021} as well as with filtering approaches for continuous time systems \cite{Sottinen2008}. This could be especially relevant for irregularly sampled time-series.

The reason we can do the mapping between a low-rank RNN (Eq.~\ref{eq:RNN_cont}) and a latent dynamical system (Eq.~\ref{eq:LRRNN_cont}) crucially relies on our assumption that samples from the recurrent noise process are correlated, such that they lie within the column-space of $\mathbf{M}$. \citet{Valente2022b} showed that for linear low-rank RNNs arbitrary covariances in the full $N$ dimensional space can be used, when increasing the dimensionality of the latent dynamics to twice the rank $R$ (to the column space of both $\mathbf{M}$ and $\mathbf{N}$), this however does not generalise to our non-linear setting. We do expect correlated recurrent noise to be appropriate for modeling stochasticity arising from unobserved inputs or from partial observations \cite{Valente2022b} ---additionally, correlated noise constituted a pragmatic choice that allows building an \emph{stochastic} model that can allow for trial-by-trial variability while maintaining the tractability of low-rank deterministic RNNs. 

Still, future work can investigate training networks with more relaxed assumptions on the recurrent noise models, including extensions to non-Gaussian noise-processes. The latter could be of particular interest if more biologically plausible (i.e., spiking) neurons were used in the recurrence \cite{Sourmpis2023,Cimesa2023}.

Our results also open up further avenues to explore questions in neuroscience. The relation between LFP and spike (phase) in the hippocampus has been of great interest \cite{OKeefe1993,Buzsaki2006,Mizuseki2009, Liebe2022}. While we performed some preliminary investigation into the relation between the inferred latents and the local field potential, further studies could perform a systematic investigation into their relation, for instance, by using a multi-modal setup \cite{Brenner2024}, or to investigate multi-region temporal relationships and interactions \cite{Perich2020}.

Taken together, by inferring low-rank RNNs with variational SMC, we obtained generative models of neural data whose trajectories match observed variability, and whose underlying latent dynamics are tractable.

\section*{Code availability}
Code to reproduce our results is available at \url{https://github.com/mackelab/smc_rnns}.

\section*{Acknowledgments}
This work was supported by the German Research Foundation (DFG) through Germany’s Excellence Strategy (EXC-Number 2064/1, PN 390727645), SFB 1089 (PN 227953431) and SPP2041 (PN 34721065), the German Federal Ministry of Education and Research (Tübingen AI Center, FKZ: 01IS18039; DeepHumanVision, FKZ: 031L0197B), and the European Union (ERC, DeepCoMechTome, 101089288), the ‘Certification and Foundations of Safe Machine Learning Systems in Healthcare’ project funded by the Carl Zeiss Foundation. MP and MG are members of the International Max Planck Research School for Intelligent Systems (IMPRS-IS). We thank Cornelius Schröder for feedback on the manuscript, and all members of Mackelab for discussions throughout the project.

{\small
\bibliographystyle{abbrvdunsrt}
\bibliography{neurips_bib}
}
\newpage
\appendix
\renewcommand{\figurename}{Supplementary Figure}
\setcounter{figure}{0}

\section{Supplemental material}

\subsection{Proof of preposition 1}\label{A:Proposition_1}
\subsubsection{Problem definition}
We are interested in finding all fixed points of the following equation: \begin{align}\label{eq:RNN} \tau\frac{d\mathbf{x}}{dt}=-\mathbf{x}(t)+\mathbf{J}\phi(\mathbf{x}(t)),\end{align}
with $\mathbf{x}(t)\in \mathbb{R}^N$, element-wise nonlinearity $\phi(\mathbf{x}_i)=\sum_d^D\mathbf{b}_i^{(d)}\mathsf{max}(\mathbf{x}_i-\mathbf{h}_i^{(d)})$ and low-rank matrix $\mathbf{J}=\mathbf{MN}^{\mathsf{T}}$, with $\mathbf{M},\mathbf{N} \in \mathbb{R}^{N\times R}$ and $R\le N$. Since $\tau$ only scales the speed of the dynamics, we will, for convenience and without loss of generality, assume $\tau=1$.

\subsubsection{Preliminaries: Fixed points in Piecewise-linear RNNs}\label{A:analytic_FPs}
First, we briefly repeat results from \citet{Durstewitz2017}. Assume $D = 1$, $\phi(\mathbf{x}_i)=\mathsf{max}(\mathbf{x}_i-\mathbf{h}_i)$. To find all fixed points of Eq.~\ref{eq:RNN}, start by redefining $\phi$ by introducing a diagonal indicator matrix:
\begin{align}\label{eq:D}
    {\mathbf{D}_\Omega=\begin{bmatrix}
            d_1  &   &      &    \\
            &   d_2  &      &    \\
            &    &   \ddots &    \\
            &    &   &      d_N  \end{bmatrix}},
\end{align}
with $ d_i=\begin{cases}
    1,& \text{if } \mathbf{x}_i>\mathbf{h}_i\\
    0,& \text{otherwise}
\end{cases}$. 

Then our RNN equation, for a given $\mathbf{x}$ and corresponding $\mathbf{D}_\Omega$ reads:
\begin{align*}
       \frac{d\mathbf{x}}{dt}=-\mathbf{x}(t)+\mathbf{J}\mathbf{D}_\Omega\mathbf{x}(t)-\mathbf{J}\mathbf{D}_\Omega\mathbf{h}.
\end{align*}

\noindent Each of the $2^N$ configurations of $\mathbf{D}_\Omega$ corresponds to a region in which the dynamics are linear. Thus, for each configuration, we can solve:

\begin{align*}
    \mathbf{0}&=-\mathbf{x}+\mathbf{J}\mathbf{D}_\Omega\mathbf{x}-\mathbf{JD}_\Omega \mathbf{h},\\
    \mathbf{x}^*&=(\mathbf{JD}_\Omega-\mathbf{I})^{-1}\mathbf{J}\mathbf{D}_\Omega\mathbf{h}.
\end{align*}

\noindent Next, we check whether the obtained $\mathbf{x}^*$ is consistent with the assumed $\mathbf{D}_\Omega$ (Eq.~\ref{eq:D}). If so, we found a fixed point of the RNN. We have to check, as the solution to the system of linear equations can lie outside of the linear regions specified by $\mathbf{D}_\Omega$. Note that if for some $\mathbf{D}_\Omega$ the matrix $\mathbf{JD}_\Omega-\mathbf{I}$ is not invertible, then there is no single fixed point, but we still can find a structure of interest (e.g., a direction with eigenvalue 0 corresponds to marginal stability, i.e., a line attractor).

\subsection{Preliminaries: Fixed points in Piecewise-linear low-rank RNNs}

First, assume $\mathbf{x}(0)$ is in the subspace spanned by the columns of $\mathbf{M}$. With the low-rank assumption, we can rewrite Eq.~\ref{eq:RNN} for all $t\in[0,\infty)$, by projecting it on $\mathbf{M}$ \cite{Mastrogiuseppe2018,Beiran2021,Dubreuil2022}:

\begin{align}
    \frac{d\mathbf{z}}{dt}=-\mathbf{z}(t)+\mathbf{N}^{\mathsf{T}}\phi(\mathbf{Mz}(t)-\mathbf{h})
\end{align}

\noindent with $\mathbf{x}(t)=\mathbf{Mz}(t)$.

Now assume $\mathbf{x}(0)$ contains some part $\mathbf{x}^\perp(0)$ not in the subspace spanned by $\mathbf{M}$, i.e., we have $\mathbf{x}(0) = \mathbf{Mz}(0)+\mathbf{x}^\perp(0)$. The dynamics of $\mathbf{x}^\perp$(t) are simply given by $\frac{\mathbf{x}^\perp}{dt}=-\mathbf{x}^\perp(t)$ which will decay to its stable point at $\mathbf{0}$ irrespective of $\mathbf{z}(t)$, and can thus not contribute additional fixed points.

Naively, using the same strategy as before to obtain all fixed points $\mathbf{z}$, we would need to solve $2^N$ linear systems of $R$ equations (again for all configurations of $\mathbf{D}_\Omega$):

\begin{align}\label{eq:D_z}\mathbf{z}^*=(\mathbf{N}^{\mathsf{T}}\mathbf{D}_\Omega\mathbf{M}-\mathbf{I})^{-1}\mathbf{N}^{\mathsf{T}}\mathbf{D}_\Omega\mathbf{h},\end{align}

\subsection{Preliminaries: Hyperplane arrangements}
\label{subsec:preliminaries}
\begin{figure}[tp]
  \centering
  \includegraphics[]{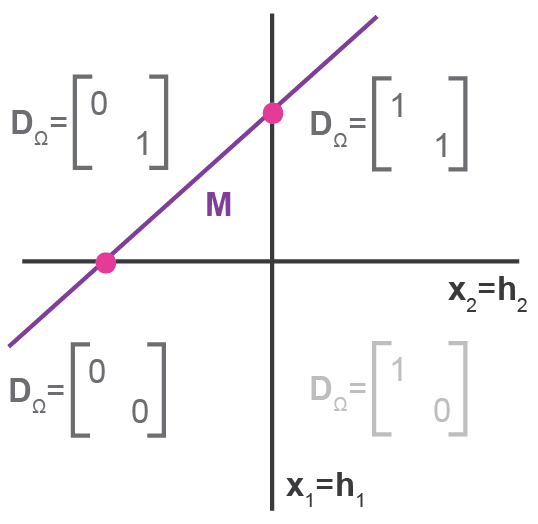}
  \caption{Proof sketch including $\mathbf{D}_{\Omega}$'s. The phase-space of an RNN with $N$ (here 2) units with activation $\max(0,\mathbf{x}_i-\mathbf{h}_i)$ is partitioned into $2^N$ (here 4) regions in which the dynamics are linear, each corresponding to a configuration of $\mathbf{D}_{\Omega}$. If dynamics are confined to the $R$-dimensional subspace spanned by the columns of $\mathbf{M}$, only a subset (here 3) can be reached. Each unit intersects the space spanned by the columns of $\mathbf{M}$ with a hyperplane (the pink points in the Figure). The amount of linear regions in $\mathbf{M}$, thus becomes equivalent to "how many regions can we create in $R$-dimensional space with $N$ hyperplanes?"}\label{fig:proof_sketch}
\end{figure}
In the subsequent section, we will turn to the question of how many equations we need to solve to find all possible fixed points. Recall that it is possible to calculate the fixed points analytically because piecewise-linear nonlinearities partition space into subregions in which dynamics are linear. Each of the linear regions corresponds to a 
configuration of $\mathbf{D}_\Omega$. For networks with low-rank connectivity, we have to consider only a small subset of those, as only a small subset of all configurations of $\mathbf{D}_\Omega$ correspond to $\mathbf{x}$'s within the column space of $\mathbf{M}$ (See Fig.~S\ref{fig:proof_sketch}). To find out exactly how many regions lie within the column space, we will need to answer the question: \emph{in how many regions can we divide $R$-dimensional space, with $N$ hyperplanes?} 

To answer this question in general, we will need a theorem from the field of \textit{hyperplane arrangements} \cite{Schlaefli1901,Buck1943,Zaslavsky1975,Stanley2007}. Here we give a brief introduction.

\paragraph{Introduction to hyperplane arrangements:}
A finite arrangements of hyperplanes is a set of $N$ affine subspaces $\mathcal{A} = \{ a_1, \dots, a_N\}$ in some vector space $V = \mathbb{R}^R$. Recall a hyperplane is an $R-1$ dimensional subspace defined by a linear equation $a_i := \{ \mathbf{v}\in V| \mathbf{m}^\mathsf{T}\mathbf{v} = h$\} for some $\mathbf{m} \in V, h \in \mathbb{R}$. Note that any linear system of equations $\mathbf{M}\mathbf{v} = \mathbf{h}$ with $\mathbf{M} \in \mathbb{R}^{N \times R}$ equivalents defines an arrangement of $N$ hyperplanes in $R$ dimensional space. In Fig.~S\ref{fig:arrangements}\textbf{a},\textbf{c}, we show arrangements of $3$ hyperplanes in $\mathbb{R}^2$. In this case, a hyperplane is a line, but there are infinitely many possibilities on how we can arrange these lines in two-dimensional space. We are interested in 
\begin{align*}\mathcal{N}(\mathcal{A}) := \text { number of regions } \mathcal{A} \text{ partitions } \mathbb{R}^R,
\end{align*}
where regions correspond to the connected components of $\mathbb{R}^R \setminus \mathcal{A}$. In this simple case, we can visually verify that the arrangements in Fig.~S\ref{fig:arrangements}\textbf{a} partitions the space into 7 regions, whereas the arrangement in Fig.~S\ref{fig:arrangements}\textbf{c} partitions the space into only 6 regions. Clearly, the number of regions $\mathcal{A}$ partitions space in is strongly related to the number of unique intersections of lines. We have fewer regions in Fig.~S\ref{fig:arrangements}\textbf{c}, simply because all lines intersect at the same point. If we can wiggle the hyperplanes a little, and not change the number of regions (as we can do in Fig.~S\ref{fig:arrangements}\textbf{a}, but not Fig.~S\ref{fig:arrangements}\textbf{c}), we call the hyperplanes in \emph{general} position (see Theorem \ref{theorem:zaslavsky} for a formal definition). 

To count the amount of regions for any arrangement of hyperplanes, we can leverage an algebraic construction called the \textit{intersections poset} $\mathcal{L}(\mathcal{A})$. This is the set of all
nonempty intersections of hyperplanes in $\mathcal{A}$ and includes $V$. Elements of this set are generally referred to as \emph{flats}. The flats are ordered by reverse inclusion $x \leq y \iff x \supseteq y$ in the intersection poset. We visualized example intersection posets of the previous examples (Fig.~S\ref{fig:arrangements}\textbf{b}, \textbf{d}). Here we organized the flats by dimensionality (such a visualization is called a Hesse Diagram). Importantly for any real arrangement $\mathcal{A}$, $\mathcal{N}(\mathcal{A})$ solely depends on $\mathcal{L}(\mathcal{A})$ (Corollary 2.1, \cite{Stanley2007}). 

To calculate $\mathcal{N}(\mathcal{A})$ from  $L(\mathcal{A})$, we need one last construction, namely the Möbius function, recursively defined by
\begin{align}\mu(\mathcal{X},s) =
\begin{cases}
  1 & \text{if $s=\mathcal{X}$} \\
  \text{$-\sum_{\mathcal{X}\supseteq s' \supset s}$$\mu(\mathcal{X},s')$,} & \text{if $s \subset \mathcal{X}$.}
\end{cases}
\end{align}
The numerical values for the example are shown in Fig.~S\ref{fig:arrangements}.

\newtheorem{theorem}{Theorem}
\newtheorem{lemma}{Lemma}
\newtheorem{proposition}{Proposition}

\begin{figure}[tp]
  \centering
  \begin{minipage}{0.3\textwidth}
    \centering
    \begin{tikzpicture}
      % Draw lines
      \node (l) at (-2,2) {\textbf{a}};
      \draw[thick] (-1.5,0) -- (1.5,0) node[above] {$a_1$};
      \draw[thick] (0, -1.5) -- (0,1.5) node[left] {$a_2$};
      \draw[thick] (-1.5,-0.5) -- (0.5,1.5) node[right] {$a_3$};
    \end{tikzpicture}
  \end{minipage}
  \begin{minipage}{0.65\textwidth}
    \centering
    \begin{tikzpicture}
      % Nodes
       \node (l) at (-1.5,1.75) {\textbf{b}};
      \node (H1) at (0,0) {$a_1$};
      \node (H1_moebius) at (-.4,-0.3) {\color{blue}  $-1$};
      \node (H2) at (3,0) {$a_2$};
      \node (H2_moebius) at (2.6,-0.3) {\color{blue}  $-1$};
      \node (H3) at (6,0) {$a_3$};
      \node (H3_moebius) at (6.2,-0.3) {\color{blue}  $-1$};
      \node (H1H2) at (0,1.5) {$a_1 \cap a_2$};
      \node (H1H2_moebius) at (-.8,1.3) {\color{blue}  $1$};
      \node (H2H3) at (3,1.5) {$a_2 \cap a_3$};
      \node (H1H2_moebius) at (2.2,1.3) {\color{blue}  $1$};
      \node (H1H3) at (6,1.5) {$a_1 \cap a_3$};
      \node (H1H3_moebius) at (6.7,1.3) {\color{blue}  $1$};
      \node (R2) at (3,-1.5) {$\mathbb{R}^2$};
      \node (R2_moebius) at (2.6,-1.75) {\color{blue} $1$};

      % Lines
      \draw (H1) -- (H1H2);
      \draw (H2) -- (H1H2);
      \draw (H2) -- (H2H3);
      \draw (H3) -- (H2H3);
      \draw (H1) -- (H1H3);
      \draw (H3) -- (H1H3);
      
      % Lines connecting to R^2
      \draw (H1) -- (R2);
      \draw (H2) -- (R2);
      \draw (H3) -- (R2);
    \end{tikzpicture}
  \end{minipage}

    \begin{minipage}{0.3\textwidth}
    \centering
    \vspace{0.5cm}
    \begin{tikzpicture}
      % Draw lines
      \node (l) at (-2,2) {\textbf{c}};
      \draw[thick] (-1.5,0) -- (1.5,0) node[above] {$a_1$};
      \draw[thick] (0, -1.5) -- (0,1.5) node[left] {$a_2$};
      \draw[thick] (-1.,-1.) -- (1.,1.) node[right] {$a_3$};
    \end{tikzpicture}
  \end{minipage}
  \begin{minipage}{0.65\textwidth}
    \centering
    \vspace{0.5cm}
    \begin{tikzpicture}
      % Nodes
      \node (l) at (-1.5,1.75) {\textbf{d}};
      \node (H1) at (0,0) {$a_1$};
      \node (H1_moebius) at (-.4,-0.3) {\color{blue}  $-1$};
      \node (H2) at (3,0) {$a_2$};
      \node (H2_moebius) at (2.6,-0.3) {\color{blue}  $-1$};
      \node (H3) at (6,0) {$a_3$};
      \node (H3_moebius) at (6.2,-0.3) {\color{blue}  $-1$};
      \node (H1H2H3) at (3,1.5) {$a_1 \cap a_2 \cap a_3$};
      \node (H1H2H3_moebius) at (1.7,1.5) {\color{blue} 2};
      \node (R2) at (3,-1.5) {$\mathbb{R}^2$};
      \node (R2_moebius) at (2.6,-1.75) {\color{blue} $1$};

      % Lines
      \draw (H1) -- (H1H2H3);
      \draw (H2) -- (H1H2H3);
      \draw (H3) -- (H1H2H3);
      
      % Lines connecting to R^2
      \draw (H1) -- (R2);
      \draw (H2) -- (R2);
      \draw (H3) -- (R2);
    \end{tikzpicture}
  \end{minipage}
  \caption{\textbf{a)} An arrangement of 3 hyperplanes $a_1,a_2$ and $a_3$ in \textit{general position}. \textbf{b)} the associated intersection poset of the arrangement in \textbf{a}. \textbf{c)} An alternative arrangement with its associated intersection poset. \textbf{d)}. Blue numbers indicate the value of the Möbius function.}
  \label{fig:arrangements}
\end{figure}

\begin{theorem}[Zaslavsky's Theorem; \cite{Zaslavsky1975, Stanley2007}]
\label{theorem:zaslavsky}
Given a vector space $V = \mathbb{R}^R$ and an arrangement of $N$ hyperplanes $\mathcal{A} = \{a_1, \dots, a_N\}$ on $V$, then the number of regions $\mathcal{A}$ partitions $V$ in (denoted $\mathcal{N}(\mathcal{A}$), can be expressed as follows
$$ \mathcal{N}(\mathcal{A}) = \sum_{s \in L(\mathcal{A})} \mu(\mathbb{R}^R, s) (-1)^{\mathsf{dim} (s)} $$
furthermore, it holds that
\begin{equation}
    \label{bound:zaslavsky}
    \mathcal{N}(\mathcal{A}) \leq \sum_{r=0}^R \binom{N}{r} 
\end{equation}
with equality if and only if $\mathcal{A}$ is in \textit{general position} i.e., $\mathcal{A}$ must satisfy
\begin{itemize}
    \item[(i)] $\{a_1, \dots, a_p\}\subseteq \mathcal{A} \text{ and }  p \leq R \Rightarrow \mathsf{dim}(\bigcap_{i=1}^p a_i) = N-p$
    \item[(ii)] $\{a_1, \dots, a_p\}\subseteq \mathcal{A} \text{ and }  p >  R \Rightarrow \bigcap_{i=1}^p a_i = \emptyset $
\end{itemize}
\end{theorem}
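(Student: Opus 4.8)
The plan is to prove both assertions — the Möbius-function region count and the bound with its equality condition — by a single device: \emph{deletion–restriction} induction on the number of hyperplanes $N$, with a secondary induction on the dimension $R$. Fix one hyperplane $H = a_N \in \mathcal{A}$ and form the deleted arrangement $\mathcal{A}' = \mathcal{A}\setminus\{H\}$ in $\mathbb{R}^R$ together with the restricted arrangement $\mathcal{A}'' = \{H\cap a : a\in\mathcal{A}',\, H\cap a\neq\emptyset\}$, which is an arrangement of at most $N-1$ hyperplanes living inside $H\cong\mathbb{R}^{R-1}$. Each of these has strictly fewer hyperplanes than $\mathcal{A}$, or sits in strictly lower dimension, so both are available to the inductive hypothesis. (An alternative route, the finite-field method of counting $\mathbb{F}_q$-points and specializing the characteristic polynomial, would also work, but the deletion–restriction argument keeps everything over $\mathbb{R}$ and handles the bound and the formula uniformly.)

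First I would establish the geometric recursion $\mathcal{N}(\mathcal{A}) = \mathcal{N}(\mathcal{A}') + \mathcal{N}(\mathcal{A}'')$. Reinstating $H$ into $\mathcal{A}'$ leaves untouched every region of $\mathcal{A}'$ that $H$ misses and cuts in two exactly those regions of $\mathcal{A}'$ that $H$ passes through; the pieces into which the other hyperplanes carve $H$ are precisely the regions of $\mathcal{A}''$, and each such piece is the interface along which one region of $\mathcal{A}'$ is split. This produces a bijection between the newly created regions and the regions of $\mathcal{A}''$, giving the recursion.

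Next I would show the claimed combinatorial expression obeys the same recursion. Writing $\chi_\mathcal{A}(t) = \sum_{s\in L(\mathcal{A})}\mu(\mathbb{R}^R,s)\,t^{\dim s}$ for the characteristic polynomial, the standard deletion–restriction identity for the intersection poset reads $\chi_\mathcal{A}(t) = \chi_{\mathcal{A}'}(t) - \chi_{\mathcal{A}''}(t)$; I would prove this by partitioning $L(\mathcal{A})$ into the flats contained in $H$ and those not contained in $H$, matching the two groups to $L(\mathcal{A}'')$ and $L(\mathcal{A}')$ respectively, and tracking the Möbius values through their recursive definition. Evaluating this identity at the appropriate argument, and matching the sign convention $(-1)^{\dim s}$ against the codimension so that the dimension drop from $\mathbb{R}^R$ to $H$ is accounted for, converts it into $\mathcal{N}(\mathcal{A}) = \mathcal{N}(\mathcal{A}') + \mathcal{N}(\mathcal{A}'')$ for the Möbius expression as well. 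Since the geometric region count and the Möbius expression then satisfy the same recursion and agree on the base cases (the empty arrangement, where both equal $1$, and a single hyperplane), induction on $N$ forces them to coincide, proving the exact formula.

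Finally, for the bound I would feed the recursion into a double induction using Pascal's identity. The deleted arrangement $\mathcal{A}'$ has $N-1$ hyperplanes in $\mathbb{R}^R$ and the restriction $\mathcal{A}''$ has at most $N-1$ hyperplanes in $\mathbb{R}^{R-1}$, so by the inductive hypothesis $\mathcal{N}(\mathcal{A}')\le\sum_{r=0}^{R}\binom{N-1}{r}$ and $\mathcal{N}(\mathcal{A}'')\le\sum_{r=0}^{R-1}\binom{N-1}{r}$; adding and collapsing via $\binom{N-1}{r}+\binom{N-1}{r-1}=\binom{N}{r}$ yields exactly $\sum_{r=0}^{R}\binom{N}{r}$. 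The equality analysis is the part I expect to be the main obstacle: I must show that general position is inherited by both $\mathcal{A}'$ and $\mathcal{A}''$, so that equality propagates downward through the induction, and conversely that equality in the summed bound forces $\mathcal{A}''$ to consist of exactly $N-1$ distinct, generically placed hyperplanes — which, unwound through conditions (i) and (ii), is precisely general position for $\mathcal{A}$. The delicate bookkeeping is controlling how generality can fail: a hyperplane parallel to $H$ or two hyperplanes restricting to the same flat of $H$ make $\mathcal{A}''$ lose hyperplanes and strictly lower its count, while higher-order coincidences collapse flats in $L(\mathcal{A})$ and shrink the Möbius sum, and I must verify that every such degeneracy strictly decreases $\mathcal{N}(\mathcal{A})$ so that no non-generic arrangement can attain the bound.
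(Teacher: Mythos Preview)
Your proposal is correct and follows essentially the same route the paper points to: the paper does not actually prove Zaslavsky's theorem but instead cites \cite{Zaslavsky1975,Stanley2007} and remarks that the result ``is based on'' the deletion--restriction recursion $\mathcal{N}(\mathcal{A}\cup\{a_{N+1}\}) = \mathcal{N}(\mathcal{A}) + \mathcal{N}(\mathcal{A}^{a_{N+1}})$ (their Lemma 2.1 from Stanley), which is exactly the backbone of your argument. Your plan fleshes out that sketch into a full proof, including the characteristic-polynomial recursion and the Pascal-identity step for the bound; nothing more is needed.
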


One can verify this fact for the given example shown in Fig.~S\ref{fig:arrangements}. We refer to \citet{Stanley2007} for an in-depth formal introduction to this topic. Fundamentally, it is based on the following recursion that the number of regions for any arrangement satisfies
$$ \mathcal{N}(\mathcal{A} \cup \{ a_{N+1}\} ) = \mathcal{N}(\mathcal{A}) + \mathcal{N}(\mathcal{A}^{a_{N+1}})$$
where $\mathcal{A}^{a_{N+1}} := \{ a_{N+1} \cap a_i | a_i \in \mathcal{A}, a_{N+1} \cap a_i \neq \emptyset, a_{N+1} \nsubseteq a_i\}$ (Lemma 2.1, \citet{Stanley2007}). Note that $a_{N+1}$ is itself an $R-1$ dimensional vector space, and each intersection $a_{N+1} \cap a_i$ is an $R-2$ dimensional hyperplane within $a_{N+1}$ (e.g., the intersection of two planes is a line within the planes). Hence $\mathcal{A}^{a_{N+1}}$ is itself an arrangement of $N$ hyperplanes, but in an $R-1$ dimensional subspace. In fact, the intersection poset exhaustively enumerates the elements of all possible $\mathcal{A}^{a_{i}}$, and the Möbius function can be shown to satisfy the above recursion.

If we choose $\phi(\mathbf{x}_i) = \max(\mathbf{x}_i - \mathbf{h}_i,0)$ i.e $D=1$, each neuron would partition space by a single hyperplane $\mathbf{x}_i = \mathbf{h}_i$ or equivalently the $R$ dimensional subspace by the hyperplane $\mathbf{M}_i\mathbf{z} = \mathbf{h}_i$. Hence, the hyperplane arrangement is determined by the matrix $\textbf{M}$ and offset $\mathbf{h}$. As these quantities are learned during training of the RNN, this arrangement is often in a general position because it is simply numerically unlikely that two hyperplanes are exactly parallel or intersect in exactly the same "point".%Rendering the bound given in Equation \ref{bound:zaslavsky} tight. 
This does, however, change in the general case $D > 1$, for which we derive a tighter bound in the section below.

\paragraph{Arrangements of parallel families}
%Theorem \ref{theorem:zaslavsky} would be sufficient to show the proposition for the case $D=1$, but not for the general case. For that, we need some additional work.
For the general case $\phi(\mathbf{x}_i) = \sum_{d=1}^D \mathbf{b}_i^{(d)}\max(\mathbf{x}_i - \mathbf{h}_i^{(d)},0)$ each neuron will partition space with $D$ hyperplanes $\mathbf{b}_i^{(d)} \mathbf{x}_i = \mathbf{b}_i^{(d)}\mathbf{h}_i^{(d)} \iff \mathbf{x}_i = \mathbf{h}_i^{(d)}  $ as before; equivalently each neuron partitions the $R$ dimensional subspace with $D$ hyperplanes $\mathbf{M}_i\mathbf{z} = \mathbf{h}_i^{(d)}$. Notably, all the $D$ hyperplanes here will share the same row of $\mathbf{M}$, and thus they are \textit{parallel}. Clearly, any such arrangement cannot be in general arrangement by definition.

The resulting arrangement will have a very specific structure. Let's define
$$ A_i := \{ a_{i1}, \dots, a_{iD} \}$$
as a \textit{family} of $D$ parallel hyperplanes. Any pair of hyperplanes $a_{il}, a_{im} \in A_i$ is parallel. A low-rank RNN with $N$ neurons and a general piecewise-linear activation function will thus lead to an arrangement consisting of $N$ families of $D$ parallel hyperplanes.

We can use this specific structure to obtain a tighter bound.

\begin{lemma}
\label{lemma:max_num}
    Let $\mathcal{A} = A_1\cup \dots \cup A_{N-1}$ be an arrangement of $N-1$ families of $D$ parallel lines, then it satisfies the following recursion

    $$\mathcal{N}(\mathcal{A} \cup A_N) = \mathcal{N}(\mathcal{A}) + \sum_{d=1}^D \mathcal{N}\left(\mathcal{A}^{a_{Nd}}\right).$$
    Furthermore, denote by $\mathcal{N}(N,R,D)$ the maximum number of regions attainable by any arrangement of $N$ families of $D$ parallel hyperplanes in $R$ dimensional space then
    $$ \mathcal{N}(N,R,D) \leq \mathcal{N}(N-1,R,D) + D\cdot \mathcal{N}(N-1,R-1,D).$$
\end{lemma}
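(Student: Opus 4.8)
The plan is to derive both assertions from the single-hyperplane deletion--restriction recursion recalled above, namely $\mathcal{N}(\mathcal{B}\cup\{a\}) = \mathcal{N}(\mathcal{B}) + \mathcal{N}(\mathcal{B}^{a})$ for any arrangement $\mathcal{B}$ and hyperplane $a$ (Lemma 2.1 of \cite{Stanley2007}), applied $D$ times to insert the hyperplanes $a_{N1},\dots,a_{ND}$ of the last family $A_N$ one at a time. I would write $\mathcal{B}_0 = \mathcal{A}$ and $\mathcal{B}_d = \mathcal{A}\cup\{a_{N1},\dots,a_{Nd}\}$, so that the recursion gives $\mathcal{N}(\mathcal{B}_d) = \mathcal{N}(\mathcal{B}_{d-1}) + \mathcal{N}(\mathcal{B}_{d-1}^{\,a_{Nd}})$. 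The key observation I would invoke is that the hyperplanes $a_{N1},\dots,a_{N,d-1}$ already inserted lie in the same parallel family as $a_{Nd}$, hence are parallel to and distinct from $a_{Nd}$, so $a_{Nd}\cap a_{Nk}=\emptyset$ for $k<d$ and those earlier hyperplanes drop out of the restriction; consequently $\mathcal{B}_{d-1}^{\,a_{Nd}} = \mathcal{A}^{a_{Nd}}$. Summing $\mathcal{N}(\mathcal{B}_d)-\mathcal{N}(\mathcal{B}_{d-1}) = \mathcal{N}(\mathcal{A}^{a_{Nd}})$ over $d=1,\dots,D$ telescopes to $\mathcal{N}(\mathcal{A}\cup A_N) = \mathcal{N}(\mathcal{A}) + \sum_{d=1}^{D}\mathcal{N}(\mathcal{A}^{a_{Nd}})$, which is the first claim.

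For the inequality I would first observe that $\mathcal{A}^{a_{Nd}}$ is again an arrangement of parallel families, now living inside the $(R-1)$-dimensional affine space $a_{Nd}$ (as already noted in the excerpt, $a_{Nd}$ is itself an $R-1$ dimensional space). Indeed, each of the other families $A_j$ with $j\le N-1$ restricts to $a_{Nd}$ in one of two ways: either $a_{Nd}$ is parallel to the common normal direction of $A_j$, in which case $A_j$ contributes nothing, or every $a_{jk}$ meets $a_{Nd}$ in a hyperplane of $a_{Nd}$, and since the $a_{jk}$ are mutually parallel so are their traces on $a_{Nd}$. Hence $\mathcal{A}^{a_{Nd}}$ consists of at most $N-1$ families of at most $D$ parallel hyperplanes in $\mathbb{R}^{R-1}$. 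I would then use monotonicity of $\mathcal{N}(N,R,D)$ in the number of families and in the family size --- which itself follows from deletion--restriction, since inserting a hyperplane never decreases the region count because $\mathcal{N}(\mathcal{B}^{a})\ge 1$ always --- to conclude $\mathcal{N}(\mathcal{A}^{a_{Nd}}) \le \mathcal{N}(N-1,R-1,D)$ for every $d$, and likewise $\mathcal{N}(\mathcal{A}) \le \mathcal{N}(N-1,R,D)$ since $\mathcal{A}$ is itself an arrangement of $N-1$ families of $D$ parallel hyperplanes in $\mathbb{R}^R$. Finally I would take $\mathcal{A}\cup A_N$ to be an arrangement of $N$ families of $D$ parallel hyperplanes in $\mathbb{R}^R$ attaining $\mathcal{N}(N,R,D)$ --- the maximum is attained because the region count is a nonnegative integer bounded above, e.g.\ by the general-position bound of Theorem \ref{theorem:zaslavsky} applied to the $DN$ hyperplanes --- and combine the three estimates with the recursion to get $\mathcal{N}(N,R,D) \le \mathcal{N}(N-1,R,D) + D\cdot\mathcal{N}(N-1,R-1,D)$.

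The main obstacle, and essentially the only point needing genuine care, is the geometric bookkeeping in the restriction step: verifying that within a parallel family the previously inserted hyperplanes really disappear from $\mathcal{B}_{d-1}^{\,a_{Nd}}$ (so the telescoping is exact), and that the restricted arrangement $\mathcal{A}^{a_{Nd}}$ is still of parallel-family type with the stated parameters. This includes checking the degenerate possibilities --- a family parallel to $a_{Nd}$ contributing no trace, or distinct $a_{jk}$ whose traces on $a_{Nd}$ coincide --- all of which can only decrease the region count and are therefore absorbed by the monotonicity remark. Everything else is a mechanical application of the deletion--restriction recursion together with the definition of $\mathcal{N}(N,R,D)$ as a maximum.
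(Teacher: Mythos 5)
Your proof is correct and follows essentially the same route as the paper: iterate the deletion--restriction recursion (Lemma 2.1 of \cite{Stanley2007}) over the $D$ hyperplanes of $A_N$, observe that the previously inserted parallel hyperplanes have empty intersection with $a_{Nd}$ and hence drop out of the restriction so the sum telescopes, then bound $\mathcal{N}(\mathcal{A})$ by $\mathcal{N}(N-1,R,D)$ and each $\mathcal{N}(\mathcal{A}^{a_{Nd}})$ by $\mathcal{N}(N-1,R-1,D)$ since the restriction is again of parallel-family type in $\mathbb{R}^{R-1}$. Your explicit monotonicity remark for the degenerate cases (a family parallel to $a_{Nd}$ or coinciding traces, giving ``at most'' $N-1$ families of ``at most'' $D$ hyperplanes) is a slightly more careful justification of a step the paper states implicitly, but it does not change the argument.
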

\begin{proof}
    To add $A_N$ to $\mathcal{A}$, we have to add $D$ new parallel hyperplanes. We can do so by iteratively applying Lemma 2.1 \cite{Stanley2007}. We obtain
    \begin{align*}
        \mathcal{N}(\mathcal{A} \cup \{a_{N1}, \dots, a_{ND}\}) &= \mathcal{N}(\mathcal{A} \cup \{a_{N1}, \dots, a_{N(D-1)}\}) +  \mathcal{N}(\left(\mathcal{A} \cup \{a_{N1}, \dots, a_{N(D-1)}\}\right)^{a_{ND}})  \\
        &= \mathcal{N}(\mathcal{A}) + \sum_{d=1}^D \mathcal{N}\left(\left[\mathcal{A} \cup \bigcup_{i=1}^{d-1} \{ a_{Ni} \} \right]^{a_{Nd}}\right).
    \end{align*}
    Now note that $\mathcal{A}^{a_{Nj}} := \{ a_{Nj} \cap a_{lm} | a_{lm} \in \mathcal{A}, a_{Nj} \cap a_{lm} \neq \emptyset, a_{Nj} \nsubseteq a_{lm}\}$, hence by definition only hyperplanes that intersect with $a_{Nj}$ are included in this set. As $a_{Nj}$ is parallel to any other $a_{Ni}$ for all $i \neq j$, all $a_{Nj} \cap a_{Ni}$ cannot be in the set.
    Hence for any $d$, we have that
    $$\mathcal{N}\left(\left[\mathcal{A} \cup \bigcup_{i=1}^{d-1} \{ a_{Ni}\} \right]^{a_{Nd}}\right) = \mathcal{N}(\mathcal{A}^{a_{Nd}}) $$
    which proves the first equation. 

    Recall that we define $\mathcal{N}(N,R,D)$ as the maximum number of regions attainable by any arrangement. Notice that $\mathcal{A}$ by construction is an arrangement of $N-1$ families of $D$ parallel hyperplanes in $R$ dimension. Thus by definition $\mathcal{N}(\mathcal{A}) \leq \mathcal{N}(N-1,R,D)$.
    
    As noted before, the intersection set of two hyperplanes in dimension $R$ is itself a hyperplane of dimension $R-1$. Furthermore the intersection sets of $D$ parallel hyperplanes with $a_{Nd}$, remain parallel. Hence $\mathcal{A}^{a_{Nd}}$ is an arrangement of at most $N-1$ families of $D$ parallel hyperplanes in $R-1$ dimensions. Thus $\mathcal{N}(\mathcal{A}^{a_{Nd}}) \leq \mathcal{N}(N-1,R-1,D)$ leaving us with
    $$ \mathcal{N}(\mathcal{A} \cup \{a_{N1}, \dots, a_{ND}\}) \leq \mathcal{N}(N-1,R,D) + D \cdot \mathcal{N}(N-1,R-1,D). $$
    As this holds for any arrangement, it also holds for the arrangement that has $\mathcal{N}(N, R,D)$ regions (i.e., which maximizes the number of regions) and, therefore, proves the second equation.

\end{proof}

\begin{lemma}
\label{lemma:num_regions}
Let $\mathcal{A}$ be an arrangement of $N$ families of $D$ parallel hyperplanes. Then, it holds that
$$ N(\mathcal{A}) \leq \sum_{r=0}^R D^r \binom{N}{r} $$
with equality if each family is in a general position, i.e., that every subarrangement $\{a_{1j_1}, \dots, a_{Nj_N}\}$ for all $1 \leq j_i \leq D$ is in general position.
\end{lemma}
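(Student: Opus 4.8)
\textit{Proof idea.} Both the bound and the equality claim should follow by induction on the number of families $N$, with the recursion of Lemma~\ref{lemma:max_num} as the engine. Abbreviate $f(N,R) := \sum_{r=0}^{R} D^r\binom{N}{r}$. For $N=0$ there are no hyperplanes, so $\mathcal{N}(0,R,D)=1=f(0,R)$. For $N\ge 1$: if $R=0$ the ambient space is a single point and both sides equal $1$; if $R\ge 1$, Lemma~\ref{lemma:max_num} gives $\mathcal{N}(N,R,D)\le \mathcal{N}(N-1,R,D)+D\cdot\mathcal{N}(N-1,R-1,D)$, and substituting the inductive hypothesis yields $\mathcal{N}(N,R,D)\le f(N-1,R) + D\sum_{r=0}^{R-1}D^r\binom{N-1}{r}$. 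Reindexing the second sum as $\sum_{r=1}^{R}D^r\binom{N-1}{r-1}$ and adding termwise, the coefficient of $D^r$ becomes $\binom{N-1}{r}+\binom{N-1}{r-1}=\binom{N}{r}$ (Pascal's rule; the $r=0$ term is untouched since $\binom{N-1}{-1}=0$), so the right-hand side is exactly $f(N,R)$. One should note at the outset that $\mathcal{N}(N,R,D)$ really is a maximum: the number of regions of an arrangement depends only on its intersection poset, of which there are finitely many on a bounded number of hyperplanes.

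For the equality statement I would re-run the same induction, carrying as the inductive invariant ``every one-per-family subarrangement $\{a_{1j_1},\dots,a_{Nj_N}\}$ is in general position.'' The recursion inside the proof of Lemma~\ref{lemma:max_num} is an exact identity, so $\mathcal{N}(\mathcal{A})=f(N,R)$ holds as soon as the two bounds used there are tight, i.e. as soon as $\mathcal{A}':=A_1\cup\dots\cup A_{N-1}$ attains $f(N-1,R)$ and each restriction $\mathcal{A}'^{\,a_{Nd}}$ attains $f(N-1,R-1)$. Deleting the family $A_N$ clearly preserves the invariant, so the first holds by induction. For the second: inside the $(R-1)$-dimensional space $a_{Nd}$ the hyperplanes $a_{ij}\cap a_{Nd}$ again form $N-1$ families of $D$ parallel hyperplanes (parallelism within $A_i$ survives intersection, and no $a_{ij}$ is parallel to $a_{Nd}$ since they lie in different families, so every $a_{ij}$ meets $a_{Nd}$), and a one-per-family subarrangement there is in general position precisely because $\{a_{1j_1},\dots,a_{(N-1)j_{N-1}},a_{Nd}\}$ is in general position in $\mathbb{R}^R$: intersecting $p$ of the restricted hyperplanes gives $\bigl(\bigcap_{i} a_{ij_i}\bigr)\cap a_{Nd}$, of dimension $R-(p+1)=(R-1)-p$, which matches the general-position requirement in dimension $R-1$ (and the empty-intersection clause transfers likewise). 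Hence $\mathcal{N}(\mathcal{A}'^{\,a_{Nd}})=f(N-1,R-1)$ for each $d$ by induction, and plugging into the recursion gives $\mathcal{N}(\mathcal{A})=f(N,R)$.

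I expect the equality half to be the delicate part: the whole argument hinges on ``general position'' being exactly the invariant that survives both deleting a family (trivial) and restricting to a hyperplane of the remaining arrangement, and this second inheritance requires the transversality bookkeeping above to be carried out carefully --- in particular checking that every relevant intersection is nonempty and of the expected dimension. The arithmetic with $f(N,R)$, by contrast, is just Pascal's rule and presents no obstacle.
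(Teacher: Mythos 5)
Your argument is correct, and it splits naturally into a part that matches the paper and a part that does not. For the inequality, your explicit induction on $N$ (base case $N=0$, recursion from Lemma~\ref{lemma:max_num}, Pascal's rule to merge the two sums) is essentially the paper's own argument, just stated more carefully: the paper performs exactly your Pascal computation but presents it as a verification that $f(N,R)=\sum_{r=0}^R D^r\binom{N}{r}$ satisfies the recursion, leaving the induction and base case implicit. For the equality case, however, you take a genuinely different route. The paper constructs the intersection poset of the arrangement directly, clustering flats by which of the $\binom{N}{r}$ collections of $r$ families they come from (each cluster holding at most $D^r$ flats), and then invokes Zaslavsky's Theorem~\ref{theorem:zaslavsky}; your proof instead never touches the poset or the M\"obius function, but shows that under the general-position hypothesis the exact recursion inside Lemma~\ref{lemma:max_num} is tight at every step, because general position is inherited both by deleting the family $A_N$ (trivially, as a subset of conditions) and by restricting $\mathcal{A}'$ to $a_{Nd}$ (your dimension count $R-(p+1)=(R-1)-p$ and the empty-intersection transfer). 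Your version buys a self-contained induction that reuses the same machinery as the bound and makes the role of general position explicit as an invariant, at the cost of the transversality bookkeeping you flag; the paper's version buys a one-shot structural count via Zaslavsky but at the price of a fairly terse appeal to the poset construction. One small gloss in your write-up: the claim that no $a_{ij}$ is parallel to $a_{Nd}$ does not follow merely from their lying in different families --- it follows from the general-position hypothesis, since the pair extends to a one-per-family subarrangement whose pairwise intersections must have dimension $R-2$; similarly, that hypothesis is what guarantees the restricted hyperplanes are pairwise distinct inside $a_{Nd}$. With that attribution made explicit, your sketch fills in to a complete proof.
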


\begin{proof}
    We will first construct an intersection poset $L(\mathcal{A})$ on the level of families $A_i$ in general position. After all, the \textit{intersection properties} between these families is the same as between their elements, e.g., if $a_{i1}$ intersects $a_{j1}$ then also all lines in $A_i$ intersect all lines in $A_j$.

    The resulting intersection poset $L(\mathcal{A})$ can be clustered into the corresponding families. We visualize the construction in Fig.~S\ref{fig:intersection_poset_parallel families}. 
    
    At each rank $r$ (level from bottom to top), we can choose exactly $\binom{N}{r}$ families of hyperplanes that intersect (exactly the case if we just have $N$ hyperplanes in general position). To obtain a flat of dimension $R-r$ we have to choose $r$ out of the $N$ hyperplane families without replacement.

    If, e.g., two families of parallel hyperplanes $A_i, A_j$ intersect, then any element $a_{ik}$ will intersect with any element $a_{jl}$ for all $1 \leq k,l \leq D$ leading to at most $D^2$ flats within each family (there can be less as other families might intersect in the same "point"). In general, each cluster of intersections of $r$ families will contain at most $D^r$ flats.

    By construction of $L(\mathcal{A})$ and \autoref{theorem:zaslavsky}, the lemma follows directly.

    To show that this construction indeed is an upper bound for all arrangements, we can use Lemma \ref{lemma:max_num}. There, we established a recursion, which any such upper bound must satisfy. Hence, assume $\mathcal{N}(N,R,D) = \sum_{r=0}^R D^r \binom{N}{r}$. Notice that using Pascal's identity, we can rewrite
    \begin{align*}
        \mathcal{N}(N,R,D) & = \sum_{r=0}^R D^r \binom{N}{r} \\
        & = \sum_{r=0}^R D^r \left(\binom{N-1}{r} +  \binom{N-1}{r-1} \right) \\
        & = \sum_{r=0}^R D^r \binom{N-1}{r}  + \sum_{r=0}^R D^r \binom{N-1}{r-1} \\
        & = \sum_{r=0}^R D^r \binom{N-1}{r}+  \underbrace{D^0\binom{N-1}{-1}}_{:=0}  + \sum_{r=1}^R D^r \binom{N-1}{r-1}\\
        & = \mathcal{N}(N-1,R,D) +  \sum_{r=0}^{R-1} D^{r+1} \binom{N-1}{r} \\
        & = \mathcal{N}(N-1,R,D) + D \cdot \mathcal{N}(N-1,R-1,D) \\
    \end{align*}
\end{proof}

\begin{figure}[tp]
    \centering
    \begin{tikzpicture}
      % Nodes
      \node (H1) at (0,0) {$A_1\ | \ \{a_{1i} | i <= D\}$};
      \node (H2) at (5,0) {$\dots$};
      \node (H3) at (10,0) {$A_N\ | \ \{a_{Ni} | i <= D\}$};
      \node (H1H2) at (0,1.5) {$A_1 \cap A_2 \ | \ \{ a_{1i}  \cap a_{2j}| i,j \leq D\}$};
      \node (H2H3) at (5,1.5) {$\dots$};
      \node (H1H3) at (10,1.5) {$A_1 \cap A_N \ | \  \{ a_{1i}  \cap a_{Nj}| i,j \leq D\}$};
      \node (R2) at (5,-1.5) {$\mathbb{R}^R$};
      \node (END) at (5, 2) {$\vdots$};
      \node (END2) at (0, 2) {$\vdots$};
      \node (END3) at (10, 2) {$\vdots$};
      \node (FEND1) at (5, 2.5) {$\bigcap_{k=1}^R A_k\ | \  \{ \bigcap_{k} a_{kd_k} | d_1,\dots, d_k \leq D\} $};
      \node (FEND2) at (0, 2.5) {$\dots$};
      \node (FEND3) at (10, 2.5) {$\dots$};

      % Lines
      \draw (H1) -- (H1H2);
      \draw (H2) -- (H1H2);
      \draw (H2) -- (H2H3);
      \draw (H3) -- (H2H3);
      \draw (H1) -- (H1H3);
      \draw (H3) -- (H1H3);
      
      % Lines connecting to R^2
      \draw (H1) -- (R2);
      \draw (H2) -- (R2);
      \draw (H3) -- (R2);
    \end{tikzpicture}

  \caption{Construction of the intersection poset $L(\mathcal{A})$ for an arrangement of $N$ families $A_i$ of $D$ parallel hyperplanes in "general position".}  \label{fig:intersection_poset_parallel families}
\end{figure}

\subsection{Proof of proposition}
\label{subsec:proposition}

Using the previously derived techniques, we will prove here the main proposition. Furthermore, in Algorithm \ref{alg:fixedpoints}, pseudo-code is given to compute all fixed points in practice. % For simplicity, we restrict to the case of $D=1$ and assume that the arrangement specified by $\mathbf{M}$ and $\mathbf{h}$ is in general position. This can be easily generalized to the general setting as presented in the proof, with the same complexity. 

%For $\phi(\mathbf{x}_i)=\mathsf{max}(\mathbf{x}_i-\mathbf{h}_i)$, each neuron $i$ partitions $\mathbb{R}^N$ in two with a hyperplane described by $\mathbf{x}_{i}=\mathbf{h}_i$. Using that in the columnspace of $\mathbf{M}$, we have $\mathbf{x}=\mathbf{M}\mathbf{z}$ (or more explicitly: $\mathbf{x}=\sum_r^R\mathbf{M}_r\mathbf{z}_r$), it follows that each neuron partitions the $R$ dimensional subspace spanned by columns of $\mathbf{M}$, with a hyperplane described by $\sum_r^R\mathbf{M}_{r,i}\mathbf{z}_r=\mathbf{h}_i$.

% TODO
\begin{proposition}
Assume the RNN of Eq.~\ref{eq:RNN}, with $\mathbf{J}$ of rank $R$ and piecewise-linear activations: $\phi(\mathbf{x}_i)=\sum_d^D\mathbf{b}_i^{(d)}\mathsf{max}(\mathbf{x}_i-\mathbf{h}_i^{(d)},0)$. For fixed rank $R$ and fixed number of basis functions $D$, we can find all fixed points in the absence of noise, that is all $\mathbf{x}$ for which $\frac{d\mathbf{x}}{dt}=0$, by solving at most $\mathcal{O}(N^R)$ linear systems of $R$ equations (for fixed $R$).
\end{proposition}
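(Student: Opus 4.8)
The plan is to combine the linear-algebra reduction already sketched in the preliminaries (Eq.~\ref{eq:D_z}) with the hyperplane-arrangement counting of Lemma~\ref{lemma:num_regions}. First I would recall that, by the projection argument, any fixed point $\mathbf{x}^*$ must lie in the column space of $\mathbf{M}$ (the orthogonal complement $\mathbf{x}^\perp$ decays to $\mathbf{0}$ and contributes nothing), so it suffices to search for fixed points $\mathbf{z}^*$ of the reduced $R$-dimensional system $\tfrac{d\mathbf{z}}{dt}=-\mathbf{z}+\mathbf{N}^\mathsf{T}\phi(\mathbf{M}\mathbf{z}-\mathbf{h})$. Because $\phi$ is piecewise linear with $D$ breakpoints per coordinate, the reduced phase space $\mathbb{R}^R$ is partitioned by the activation boundaries into cells on which the dynamics are affine; on each such cell the fixed-point equation becomes a single linear system of $R$ equations (the reduced analogue of the indicator-matrix computation, Eq.~\ref{eq:D_z}), whose solution we then check for consistency with the cell that defined it. Hence the number of linear systems we must solve is at most the number of cells of this partition.

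Next I would identify that partition precisely. Each unit $i$ contributes, for each of its $D$ breakpoints $\mathbf{h}_i^{(d)}$, the hyperplane $\{\mathbf{z}\in\mathbb{R}^R : \mathbf{m}_i^\mathsf{T}\mathbf{z}=\mathbf{h}_i^{(d)}\}$, where $\mathbf{m}_i$ is the $i$-th row of $\mathbf{M}$; the $D$ hyperplanes coming from a single unit are mutually parallel (they share the normal $\mathbf{m}_i$). Thus the full arrangement is exactly an arrangement of $N$ families of $D$ parallel hyperplanes in $\mathbb{R}^R$, the object studied in Lemma~\ref{lemma:num_regions}. Applying that lemma gives the bound on the number of regions,
\begin{align*}
\mathcal{N}(\mathcal{A}) \leq \sum_{r=0}^R D^r \binom{N}{r},
\end{align*}
and for fixed $R$ and fixed $D$ the right-hand side is a polynomial in $N$ of degree $R$, i.e.\ $\mathcal{O}(N^R)$. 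Combining this with the previous paragraph, all fixed points are found by solving at most $\mathcal{O}(N^R)$ linear systems of $R$ equations, which is the claim.

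I would also note the small technical points that need to be addressed for completeness: a region of the arrangement may contain no fixed point (the linear solution may fall outside its defining cell), or the relevant matrix $\mathbf{N}^\mathsf{T}\mathbf{D}_\Omega\mathbf{M}-\mathbf{I}$ may be singular — in the latter case one reports the affine solution set (e.g.\ a line attractor) rather than an isolated point, exactly as in the full-rank discussion of Supplement~\ref{A:analytic_FPs}; none of this affects the counting bound. The main obstacle is not any single step but making sure the correspondence "cells of the reduced arrangement $\leftrightarrow$ admissible configurations $\mathbf{D}_\Omega$ reachable inside $\mathrm{col}(\mathbf{M})$" is airtight, in particular that passing to the reduced system loses no fixed points and introduces no spurious ones; once that is pinned down, the counting is an immediate appeal to Lemma~\ref{lemma:num_regions}. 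For the practical side I would then point to Algorithm~\ref{alg:fixedpoints}, which enumerates exactly these regions and solves the corresponding systems.
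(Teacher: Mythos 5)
Your proposal follows the same route as the paper's proof for the counting step, and that part is sound: project onto the column space of $\mathbf{M}$ (the orthogonal component obeys $\dot{\mathbf{x}}^\perp=-\mathbf{x}^\perp$ and decays, so no fixed points are lost or gained), note that unit $i$ contributes the $D$ parallel hyperplanes $\mathbf{m}_i^\mathsf{T}\mathbf{z}=\mathbf{h}_i^{(d)}$ so the activation boundaries form an arrangement of $N$ families of $D$ parallel hyperplanes in $\mathbb{R}^R$, and apply Lemma~\ref{lemma:num_regions} to bound the number of linear regions, hence the number of candidate systems of the form Eq.~\ref{eq:D_z}, by $\sum_{r=0}^R D^r\binom{N}{r}\in\mathcal{O}(N^R)$. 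The consistency check and the singular-matrix caveat you mention are handled the same way in the paper.

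The genuine gap is precisely the point you label the ``main obstacle'' and then defer to Algorithm~\ref{alg:fixedpoints}: the proposition claims we can \emph{find} all fixed points at this cost, so the proof must also show how to \emph{enumerate} the feasible configurations $\mathbf{D}_\Omega$ (the regions actually intersecting $\mathrm{col}(\mathbf{M})$) without scanning all $(D+1)^N$ of them --- otherwise the counting bound only tells you how many systems you would solve \emph{if} you already knew which regions to consider, and the naive identification step reintroduces the exponential cost the proposition is meant to eliminate. The paper closes this constructively: solve the at most $D^R\binom{N}{R}$ systems $\mathbf{M}_R\mathbf{z}_\cap=\mathbf{h}_R$ giving the intersection points of $R$-tuples of hyperplanes, and at each $\mathbf{z}_\cap$ read off the $2^R$ bordering configurations by fixing the off-threshold entries of $\mathbf{D}_\Omega$ from $\mathbf{x}_\cap=\mathbf{M}\mathbf{z}_\cap$ and ranging over all sign patterns of the $R$ thresholded units; this enumerates every reachable region with only $\mathcal{O}(N^R)$ additional size-$R$ solves, and a pseudoinverse variant (intersections of $r<R$ hyperplanes) covers arrangements not in general position, where some regions are not adjacent to any vertex. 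Your write-up acknowledges that this correspondence must be ``airtight'' but does not supply the enumeration argument, so as it stands it proves the bound on the number of regions but not the claimed procedure and cost for finding them.
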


\begin{proof}

By definition, each neuron partitions $\mathbb{R}^N$ in $D+1$ linear regions with $D$ hyperplanes described by $\mathbf{x}_{i}^{(d)}=\mathbf{h}_i^{(d)}$, for the $i$'th neuron. Using that in the columnspace of $\mathbf{M}$, we have $\mathbf{x}=\mathbf{M}\mathbf{z}$%(or more explicitly: $\mathbf{x}=\sum_r^R\mathbf{M}_r\mathbf{z}_r$)
, it follows that each neuron partitions the $R$ dimensional subspace spanned by columns of $\mathbf{M}$, with $D$ hyperplanes described by $\sum_r^R\mathbf{M}_{i,r}\mathbf{z}_r=\mathbf{h}_i^{(d)}$. Notice that these hyperplanes are parallel, as they all share the same coefficients $\mathbf{M}_i$ but have a different offset $\mathbf{h}_i^{(d)}$. Using Lemma \ref{lemma:num_regions} we know that there can only be $\sum_{r=0}^R D^r\binom{N}{r}$ such regions.

How do we find those regions? Let's first consider the case of $D=1$, and assume that the hyperplanes are in general position. We can find the corresponding configurations of $\mathbf{D}_\Omega$ as follows. We first obtain the set of all intersections of $R$ hyperplanes. For this we try to solve ${N \choose R}$ systems of $R$ equations. Let $\mathbf{M}_R \in \mathbb{R}^{R \times R}$ be the matrix obtained by choosing $R$ different rows $1,\dots, R$ of $\mathbf{M} \in \mathbb{R}^{N\times R}$ (i.e., picking $R$ neurons), then we may find the corresponding intersection of $R$ hyperplanes by solving the following linear system of $R$ equations
 \begin{align*}\mathbf{z}_{\cap} = \mathbf{M}_R^{-1}\mathbf{h}_R \qquad \text{ and } \qquad \mathbf{x}_{\cap}  = \mathbf{M}\mathbf{z}_{\cap}.\end{align*}
 
which will always have a unique solution if all hyperplanes are in general position, as then all $\mathbf{M}_R$ have rank $R$. Each $\mathbf{x}_\cap$ has $2^R$ possible bordering linear regions. We can find the corresponding $\mathbf{D}_\Omega = \mathsf{diag}([d_1, \dots, d_N)$'s matrices of each of those subsections as follows. First $d_i = \mathbb{I}(\mathbf{x}_\cap < 0)$ for all $i <= N$. By construction $1, \dots, R$ at $\mathbf{x}_\cap$ will be exactly at the threshold, by moving away from it $d_{R}$ can become either zero or one, depending on in which region we and up. Hence, the $2^R$ regions correspond to one in which either combination of neurons $1, \dots R$ is active (meaning that it is above the threshold). We thus just have to check all combinations $d_{1}, \dots, d_{R} \in \{0,1\}^R$. Using this, we will find at most $\sum^R_{r=0}{N \choose r}$ unique configurations (as this is the maximal number of regions possible for $D=1$). To find all the fixed points we hence have to solve Eq.~\ref{eq:D_z} for each configuration. We thus end up with solving ${N \choose R}$ systems of $R$ linear equations to find all regions, and another $\sum^R_{r=0}{N \choose r}\in \mathcal{O}(N^R)$ (for fixed $R$) systems of $R$ linear equations to find all fixed points.

Let us now consider the case for $D > 1$. Note that an RNN with $N$ units and $D$ basis functions per unit, can be expanded to an RNN with $ND$ units with activation $\phi(\mathbf{x}_i)=\mathsf{max}(\mathbf{x}_i-\mathbf{h}_i,0)$ (\cite{Brenner2022}, Theorem 1). Any fixed point can then still be analytically computed using Eq.~\ref{eq:D_z}. We expand the network but keep track of all $\sum_r^RD^r {N \choose R}$ possible intersections. It still holds that from each intersection, we can reach $2^R$ regions. In total, we will now find at most $\sum^R_{r=0}D^r{N\choose r}$ regions (Lemma \ref{lemma:num_regions}). To find all the fixed points, we hence have to solve $\binom{N}{R}D^r+\sum^R_{r=0}\binom{N}{r}D^r$ systems of $R$ linear equations, which for constant $D$ and $R$ has a cost of $\mathcal{O}(N^R)$

Finally, let's consider the case when hyperplanes are not in general position (which is unlikely to happen when doing numerical optimization). If there are intersections of more than $R$ hyperplanes, we proceed as before, but in case the intersection of $R$ hyperplanes we are currently considering intersects additional hyperplanes, set the diagonal elements of $\mathbf{D}_\Omega$ corresponding to these additional hyperplanes arbitrarily to $1$ (as intersections including the additional hyperplanes are considered separately). On the other hand, in case some hyperplanes are only part of intersections of less than $R$ hyperplanes (because they became parallel), we proceed as follows. Instead of considering only intersections of $R$ hyperplanes, we now also consider all possible intersections of $r$ hyperplanes, with $1\leq r \leq R$. For this, we solve no more than $\sum_r^R{N \choose r}$ systems of $r$ equations. Let $\mathbf{M}_r \in \mathbb{R}^{r \times R}$ be the matrix obtained by choosing $r$ different linearly independent rows $1,\dots, r$ of $\mathbf{M} \in \mathbb{R}^{N\times R}$; then we may find a point on the corresponding intersection of $r$ hyperplanes (note that the intersection itself can now also be a hyperplane) by to solving the following linear system of $r$ equations
% \begin{align*}\mathbf{z}_{\cap} = \mathbf{M}_r^\mathsf{T} (\mathbf{M}_r\mathbf{M}_r^\mathsf{T})^{-1}\mathbf{h}_r \qquad \text{ and } \mathbf{x}_{\cap}  = \mathbf{M}\mathbf{z}_{\cap}. \end{align*}  
 \begin{align*}\mathbf{z}_{\cap} = \mathbf{M}_r^\mathsf{\dagger}\mathbf{h}_r \qquad \text{ and } \qquad \mathbf{x}_{\cap}  = \mathbf{M}\mathbf{z}_{\cap}. \end{align*} with $\dagger$ being the pseudoinverse. 
 We here now end up with solving no more than $\sum_r^R$ ${N \choose r}$ systems of $r$ linear equations to find all regions, which has an equal cost in $N$ as the previous cases.
\end{proof}
We here provide pseudocode. For simplicity, we restrict ourselved to the case of $D=1$ and assume that the arrangement specified by $\mathbf{M}$ and $\mathbf{h}$ is in general position. This can be generalized to the general setting as presented in the proof.

\begin{algorithm}[H]

\SetAlgoLined
\KwData{$\mathbf{N} \in \mathbb{R}^{N \times R}, \mathbf{M} \in \mathbb{R}^{N\times R}, \mathbf{h} \in \mathbb{R}^N$}
\KwResult{$z\_set$ set of all fixpoints, $D\_set$ the set of all relevant $\mathbf{D}_\Omega$ configurations.}
\BlankLine 
$D\_set := \{ \}$\;
$z\_set := \{ \}$\;
$idx$ = [$1, \dots, N$]\;
$ $\\
// \ \textit{Find feasible configurations}\\
$idx\_comb$ = all $\binom{N}{R}$ combinations of indices $idx$\;
\For{$(i_1, \dots, i_R)$ in $idx\_comb$}{
$ \mathbf{M}_R =  \mathbf{M}[(i_1, \dots, i_R), :]$\;
$ \mathbf{h}_R =  \mathbf{h}[(i_1, \dots, i_R)]$\; 
// \ \textit{$\mathbf{M}_R$ is invertible as the arrangement is in general position} \\
$\mathbf{z}_\cap = \text{solve}(\mathbf{M}_R, \mathbf{h}_R)$\;
$\mathbf{x}_\cap = \mathbf{M}\mathbf{z_\cap}$\; 
$d\_init = \mathbf{x}_\cap > \textbf{h}$\;
\For{$(v_1, \dots, v_R)$ in $\{0,1\}^R$}{
    $d = d\_init[(i_1,\dots, i_R)].set(v_1, \dots, v_R)$ \;
    $\mathbf{D}_\Omega = \mathsf{diag}(d)$ \;
    $D\_set = D\_set \cup \{\mathbf{D}_\Omega \}$\;
    }
}
\BlankLine 
// \ \textit{Find fixed points, for the at most $\sum_{r=0}^R \binom{N}{r}$ configurations}\\
\For{$D_\Omega$ in $D\_set$}{
    $\mathbf{z}^* = \text{solve}(\mathbf{N}^{\mathsf{T}}\mathbf{D}_\Omega\mathbf{M}-\mathbf{I}, \mathbf{N}^{\mathsf{T}}\mathbf{D}_\Omega\mathbf{h}$) \;
    $\mathbf{x}^*=\mathbf{Mz}^*$\;
    // \ \textit{check if fixed point is consistent with assumed $\mathbf{D}_\Omega$}\\
    \If{$\mathsf{diag}(\mathbf{x}^*>\mathbf{h}) == \mathbf{D}_\Omega$}{
    $z\_set = z\_set \cup \{ \mathbf{z}^* \}$\;}
    
}
\caption{Improved exhaustive search for all fixed points}
\label{alg:fixedpoints}
\end{algorithm}

\newpage

\section{Additional figures \& tables}

\subsection{Additional statistics for Teacher-Student setups}

\begin{figure}[H]
  \centering
  \includegraphics[]{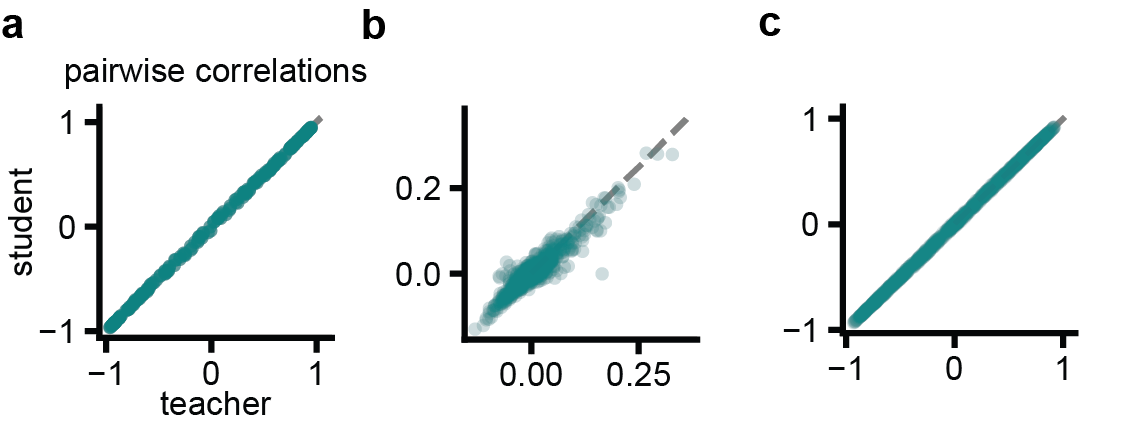}
  \caption{\textbf{a-c)} Pairwise correlations between units of the modes for panel \textbf{a-c}) of Fig.~\ref{fig:1}, respectively. Note that \textbf{c} is computed over all conditions.
}\label{fig:S_pw}
\end{figure}

\begin{figure}[H]
  \centering
  \includegraphics[]{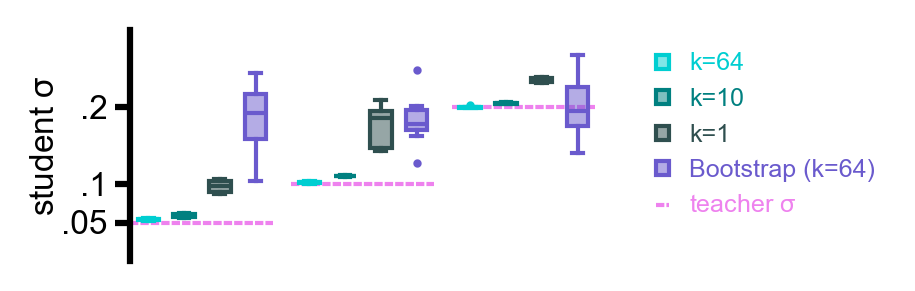}
  \caption{Our method allows recovering the true latent noise in student-teacher setups. We repeated the experiment of Fig.~\ref{fig:1}\textbf{a} for teacher networks with three levels of latent noise, with diagonal covariances matrices $\Sigma_\mathbf{z}=\sigma^2\mathbf{I}$. For each teacher we trained 5 student networks with varying number of particles (k), as well as using the bootstrap proposal (i.e., sampling from the prior). The standard deviations $\sigma$ of the latent noise process only matches between the student and teacher, if we use enough particles during training. The bootstrap proposal (with k=64) is not as reliable as the optimal proposal.
}\label{fig:S_noise_comps}
\end{figure}

\begin{figure}[H]
  \centering
  \includegraphics[]{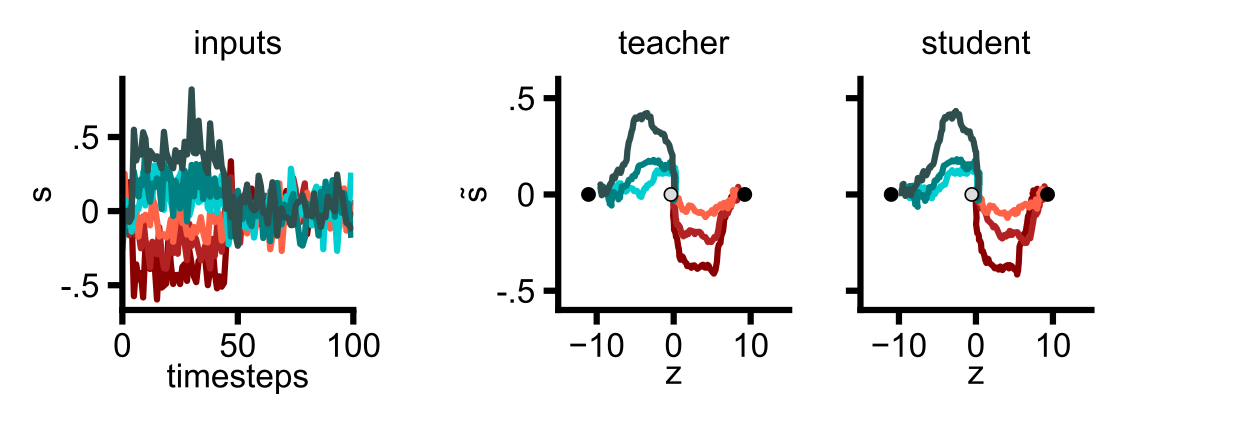}
  \caption{To demonstrate that our method works with time-varying input, a rank-1 teacher RNN with 60 units was trained to report
the sign of a time-varying stimulus (left). We then generated 400 trials of data with observation noise covariance $\Sigma_\mathbf{x}=.01\mathbf{I}$ and latent noise covariance $\Sigma_\mathbf{x}=.0025\mathbf{I}$. We trained a student on the observed activity of the teacher for 400 epochs. The matching latent dynamics of the student and teacher lie in the column space of
the recurrent and input weights (right; coordinates $z$ and $\tilde{s}$, respectively; see Supplement~\ref{A:conditional}).
}\label{fig:S_non-stationary}
\end{figure}

\subsection{EEG: Inferring full-rank RNNs }
\begin{figure}[H]
  \centering
  \includegraphics[]{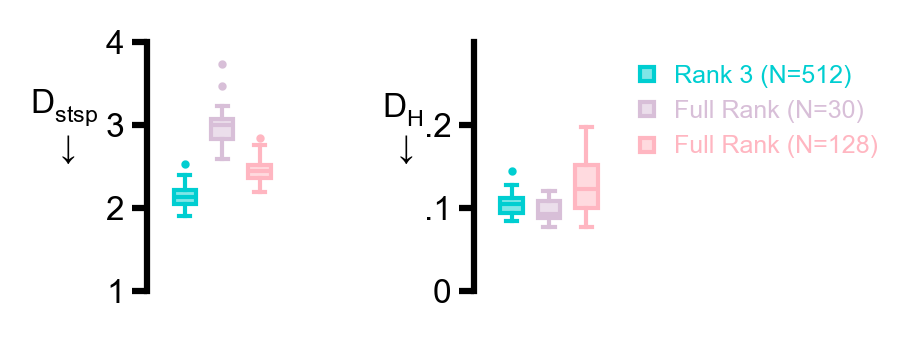}
  \caption{We fit full-rank RNNs to EEG data, by parameterising the mean of the transition distribution as $F(\mathbf{z}_t) = a\mathbf{z}_t+(1-a)\mathbf{J}\phi(\mathbf{z}_t)$. We trained full-rank RNNs with 30 units (a roughly similar amount of parameters as our rank-3 RNNs with 512 units), as well as full-rank RNNs with 128 units (over 10 times more parameters). The KL divergence-based measure ($D_{\mathsf{stsp}}$), between generated samples and data is worse for the full-rank RNNs, while also being less interpretable.
}\label{fig:S_EEG_FR}
\end{figure}

\begin{figure}[H]
  \centering
  \includegraphics[]{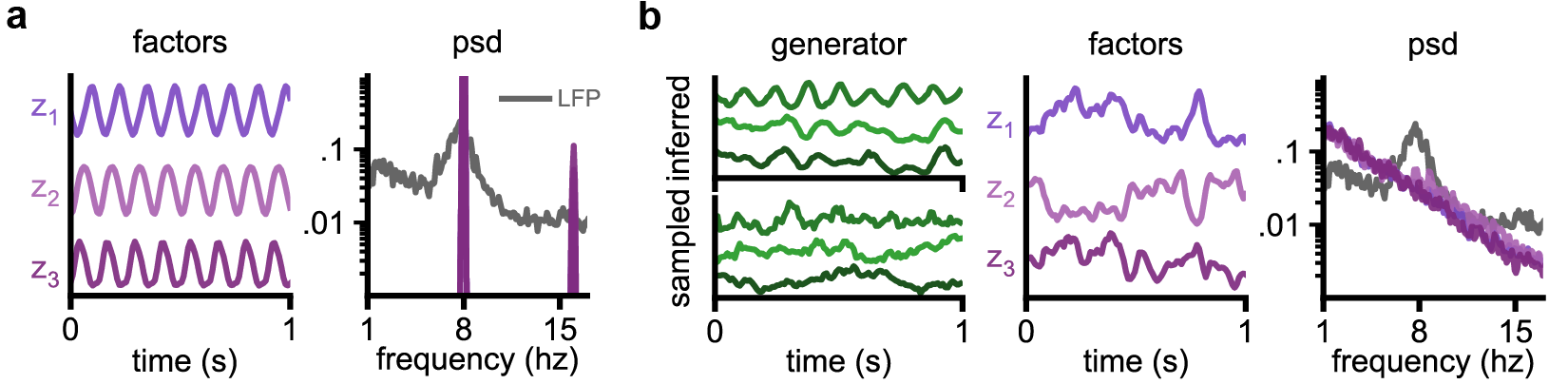}
  \caption{Latent Factor Analysis via Dynamical Systems (LFADS) \cite{Pandarinath2018} is a current method that can be used to fit RNNs to neural data, and while it is generally used for inference, can also be used for generation. We here explored sampling from LFADS, both using the autonomous version, and when using the controller (i.e., stochastic inputs). We fitted LFADS to the HPC-2 dataset (using AutoLFADS for model selection \cite{Keshtkaran2022}). \textbf{a}) In the case of an autonomous LFADS model, one samples an initial condition from the prior and then simulates a deterministic RNN forward. As on long sequences not all variability can be explained by variability in the initial condition, the latents end up not representing any variability that resembles the underlying system (cf. Fig. \ref{fig:hpc2}). \textbf{b}) In the case of a full LFADS model with the controller, one can sample both an initial condition and time-varying inputs from the controller’s auto-regressive prior. Here the full model seemed to rely overly on the controller’s data-inferred inputs for inference, which deviated quite strongly from the samples from the controller’s auto-regressive prior. As a consequence, the generated latents do not seem to represent variability that is meaningful.}\label{fig:S_LFADS}
\end{figure}

\subsection{HPC-2, additional results}
\begin{figure}[H]
  \centering
  \includegraphics[]{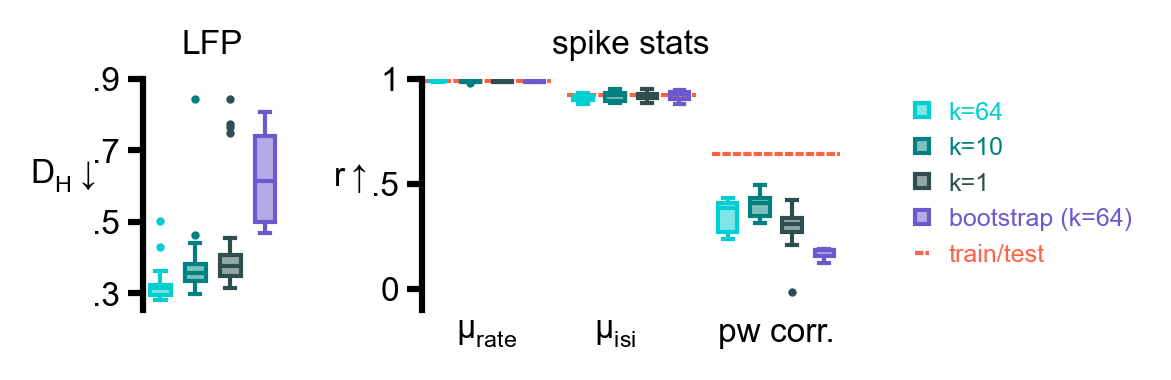}
  \caption{The Hellinger distance ($D_H$) between the power spectrum of latents and LFP of HPC-2 is lower when using the bootstrap proposal or too few particles (left), and simulated data is slightly worse when using the bootstrap proposal (right)}\label{fig:S_hpc2_supp}
\end{figure}

\subsection{HPC-11, additional results}
\begin{figure}[H]
  \centering
  \includegraphics[]{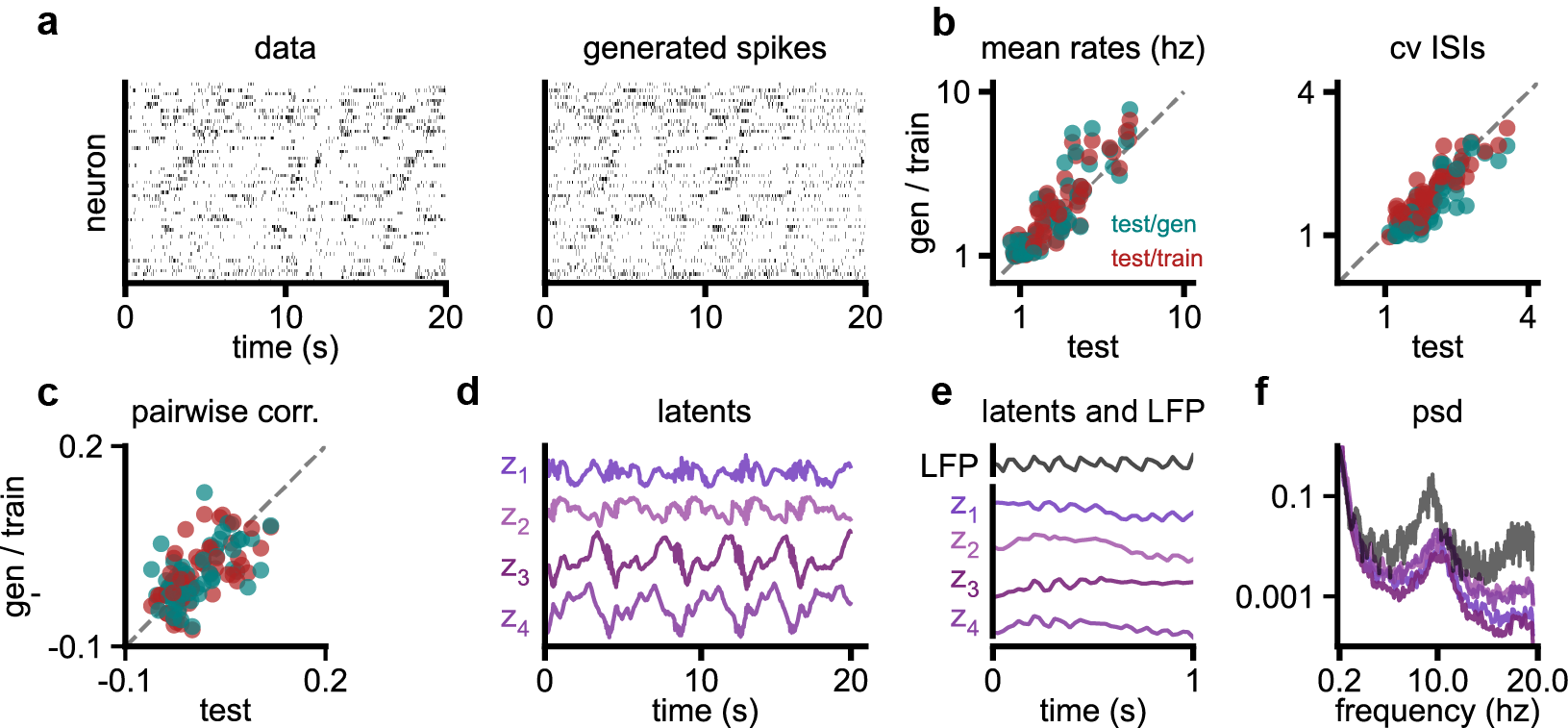}
  \caption{\textbf{a}) We fit a rank-4 RNN to spikes recorded from rat hippocampus \cite{Grosmark2016,Chen2016,Grosmark2016b}, and generate new samples from the RNN (right), taking only the part of the recording where the rat is running. \textbf{b}) Single neuron statistics. The mean rates and coefficient of variations of interspike interval (ISI) distributions of a long trajectory of data generated by the RNN (gen) match those of a held-out set of data (test). As a reference we additionally computed the same statistics between the train and test set. \textbf{c)} Population level statistics. The pairwise correlations between neurons for generated data and the test data. \textbf{d}) The corresponding latents generated by the RNN consists of 10Hz (fast theta) oscillations on top of slower oscillations. \textbf{e}) Latents with further zooming in (on time), shown together with the LFP signal. \textbf{f}) The power spectrum of latents sampled from the RNN, which show power at a slightly higher frequency than that of the LFP \cite{OKeefe1993}.}\label{fig:S_hpc11_supp}
\end{figure}
\begin{figure}[H]
  \centering
  \includegraphics[]{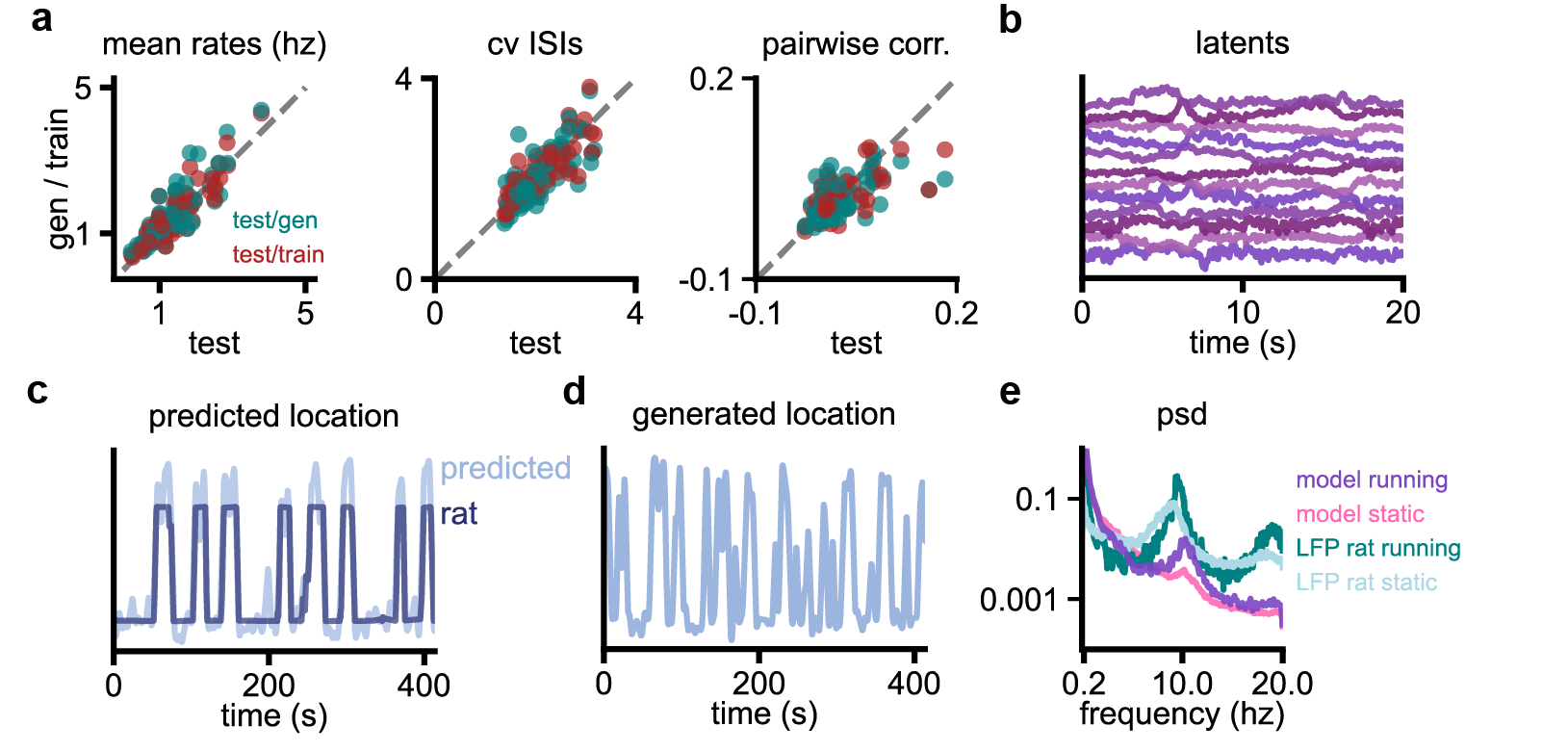}
  \caption{We additionally fit a rank-12 RNN to a whole recording (2067 seconds resampled to 40 Hz) which includes long bouts where the rat is stationary. \textbf{a}) We generate a new sample from the RNN and obtain matching spike statistics. \textbf{b}) Generated latents by our model. \textbf{c}) Inferred posterior latents can again be used to predict the location of the rat on a held-out set. \textbf{d}) Using the same decoder on generated latents, we obtain a model that also predicts alternating bouts of stationarity and running. \textbf{e}) The mean power of the latents at theta frequency during running bouts is higher then during stationarity bouts. The increased theta power during running is also there in the LFP data --- again with latent dynamics obtained from spiking data oscillating at a slightly higher frequency than the LFP.}\label{fig:S_hpc11_supp_full}
\end{figure}

\begin{figure}[H]
    \centering
    \includegraphics[width=1.0\linewidth]{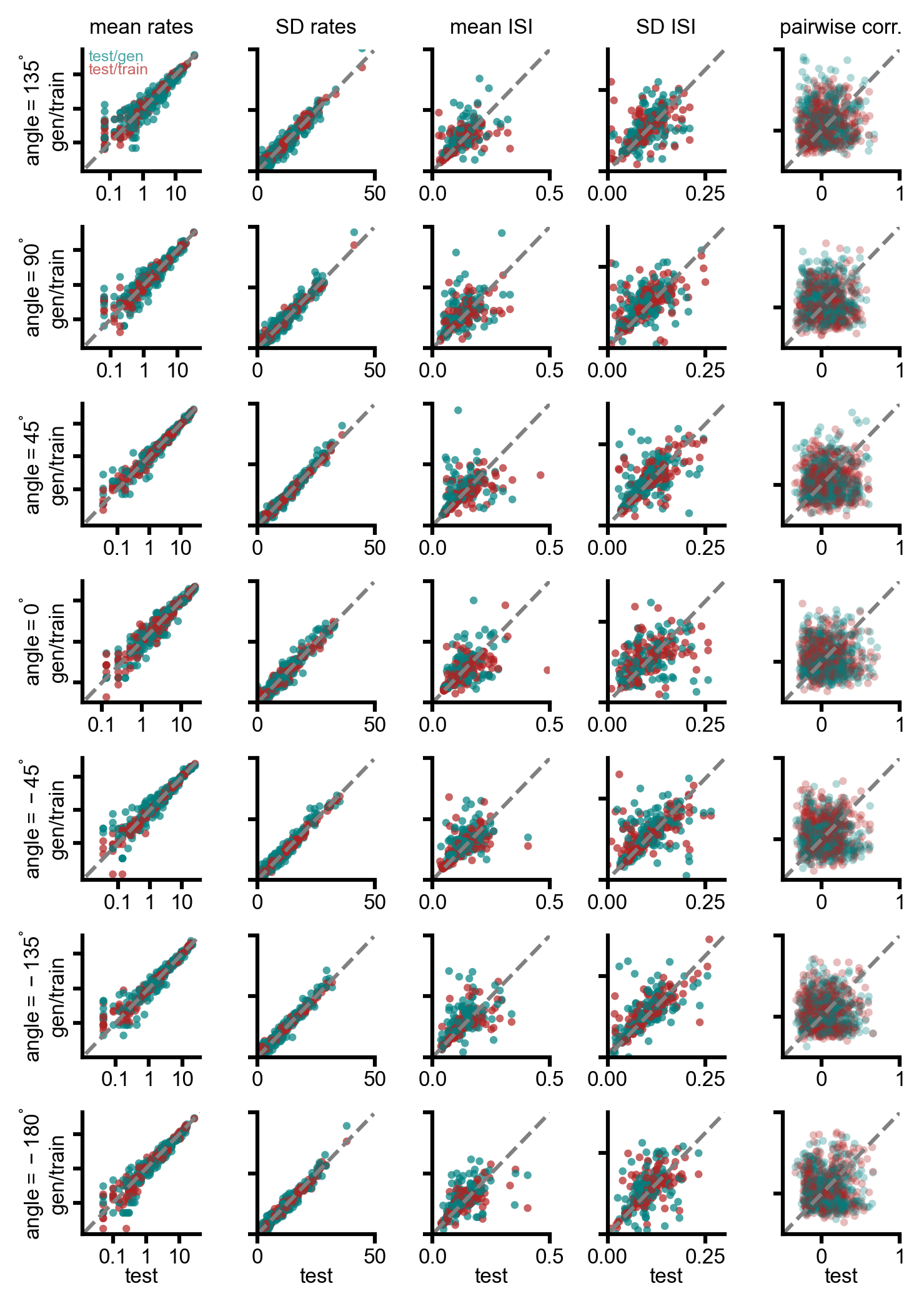}
    \caption{Spiking statistics of model-generated (teal) and train data (brick red) compared against test data.}
    \label{fig:S_ReachSpikeStats}
\end{figure} \newpage

\subsection{Neural Latents Benchmark evaluation}\label{A:NLB}
We applied our method to the \texttt{MC\_Maze} dataset of the Neural Latents Benchmark (NLB) \cite{PeiYe2021NeuralLatents} at 20 millisecond bin size (Table \ref{tab:NLB_maze}), by using our method to obtain expected Poisson rates, given a filtering posterior over latents. The benchmark evaluates methods on a number of metrics: `co-bps' (co-smoothing bits-per-spike) assesses the quality of firing rate predictions for a set of held-out neurons that are unobserved in the test data, evaluated with the Poisson likelihood of the true spiking activity given the rate predictions. `vel R2' evaluates how well the model's inferred firing rates can predict the subject's hand velocity. `PSTH R2' evaluates how well peri-stimulus time histograms (PSTHs) computed from model-inferred rates match empirical PSTHs from the data. `fp-bps' evaluates predictions on heldout timesteps (which we predict by running the RNN forward from the last data-inferred step). We found that our method outperforms classical methods (GPFA \cite{Yu2008} and SLDS \cite{Linderman2017}) while certain state-of-the-art deep learning (LFADS \cite{Pandarinath2018, Keshtkaran2022}, Neural Data Transformer \cite{Ye_2021}) are slightly better than our method on the `co-bps' metric, but our method matches them in the `vel R2' metric (in case we include smoothing information in the proposal) . We do note that NLB metrics center around evaluating the quality of smooth rates \emph{inferred} from spikes, which is not the central focus of our method, which is \emph{generation}, i.e., sampling noisy trajectories that reproduce variability in the data. We here found that the quality of inference increased when using an non-causal CNN encoder as part of the proposal distribution, and additional gains might be obtained by also changing the target distribution to a smoothing (instead of a filtering) one \cite{Lawson2022}.

While our method also has comparatively lower dimensionality than the other deep learning approaches, a latent dimensionality of $36$ is still considerably higher than all networks considered in the Main text. We reason that we need a high number of latents, because the full \texttt{MC\_Maze} dataset has a large number of conditions (108), spanning multiple maze-configurations, which may be difficult to fully model with autonomous low-dimensional latent dynamics. 
\begin{table}[!ht]
\caption{Performance of our method on the \texttt{MC\_Maze} dataset of the Neural Latents Benchmark, ‘dim’ refers to the dimensionality of the model’s underlying dynamics (where possible).}
\centering
{
\begin{tabular}{l c c c c c}
        \toprule
        method	& dim & co-bps $\uparrow$ & vel R2 $\uparrow$  &   PSTH R2 $\uparrow$  & fp-bps $\uparrow$ \\
        \midrule
        Spike smoothing       & 137 & $0.2076$ & $0.6111$ & $-0.0005$ &  \textemdash \\
        GPFA     & 52  & $0.2463$ & $0.6613$ &  $0.5574$ & \textemdash \\
        SLDS      & 38  & $0.2117$ & $0.7944$ &  $0.4709$ & $-0.1513$ \\
        LFADS      & 100  & $0.3554$ & $0.8906$ &  $0.6002$ &  $0.2454$ \\
        NDT       & 274 & $0.3597$ & $0.8897$ &  $0.6172$ &  $0.2442$ \\
        \midrule
        Ours  & 36 & $0.3225$ & $0.8479$ &  $0.5927$ &  $0.2184$ \\
        Ours (non-causal) & 36 & $0.3407$ & $0.8902$ &$0.5963$  &$0.2417$\\

        \bottomrule
\end{tabular}
}
\label{tab:NLB_maze}
\end{table}

\subsection{Stimulus-conditioning in monkey reaching task}

For the experiment with stimulus-conditioned dynamics in the monkey reaching task, we tested the performance of models with and without the conditioning inputs. We found that the conditioning inputs allow the networks to perform better on velocity decoding at lower dimensionalities.
\begin{table}[!ht]
\caption{Performance benefits of conditioning for monkey reaching task.}
\centering
{
\begin{tabular}{l c c}
        \toprule
        conditioning	& dim & vel R2 $\uparrow$ \\ 
        \midrule 
        \multirow{4}{4em}{w/o conditioning} & 5 & $0.7897 \pm 0.0687$ \\
        & 6 & $0.8944 \pm 0.0039$\\
        & 8 & $0.9085 \pm 0.0048$ \\
        & 16 & $0.9196 \pm 0.0041$ \\
        \midrule 
        \multirow{2}{4em}{with conditioning} & 5 & $0.8589 \pm 0.0493$ \\
        & 6 & $0.9018 \pm 0.0114$ \\
        \bottomrule
\end{tabular}
}
\label{tab:maze_conditioning}
\end{table}

Following the analysis in Fig. \ref{fig:fig5}, we also further visualized the model's match to the spiking statistics, including mean and standard deviation (SD) of spiking rate, and mean, SD, and coefficient of variation (CV) of inter-spike intervals. We observed a good match to the mean and SD of the spiking rate across all conditions. Match to ISI statistics is also quite reasonable given the noise observed between estimates of the statistics from train and test.

\subsection{Comparison to an approximate method for finding fixed points}\label{A:FPs}

\begin{figure}[H]
  \centering
  \includegraphics[]{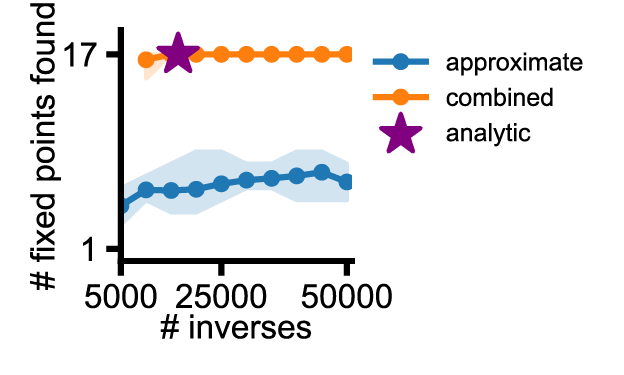}
  \caption{Repetition of the experiment of Fig.~\ref{fig:fig_FP}, but now with a rank-2 RNN with 128 units. Again, we show the number of fixed points found as a function of the number of matrix inverses computed, with errorbars denoting the minimum and maximum amount of fixed points found over 20 independent runs of the algorithm.
}\label{fig:S_FPs}
\end{figure}

Recently an approximate method for finding fixed points in piecewise-linear RNNs was proposed \cite{Eisenmann2023}. The method proceeds by randomly selecting a linear region (a configuration of $\mathbf{D}_\Omega$, see Supplement~\ref{A:analytic_FPs}) and calculating the corresponding fixed-point. If it is indeed a `true' fixed point of the RNN (it is consistent with the assumed $\mathbf{D}_\Omega$), we store it. If the fixed point was inconsistent with the assumed $\mathbf{D}_\Omega$, we iteratively initialize $\mathbf{D}_\Omega$ according to the `virtual' fixed point found and calculate the new fixed point corresponding to this $\mathbf{D}_\Omega$, until we either reach a `true' fixed point or reach a certain amount of iterations. Then, we reinitialize at a randomly selected new configuration of $\mathbf{D}_\Omega$ and repeat the procedure.

Under some conditions, the approximate method can be shown to converge in linear time ($\lVert \mathbf{M}\tilde{\mathbf{N}}^\mathsf{T} \rVert + \lVert a\mathbf{I}\rVert \leq 1$) \cite{Hess2023}, where it will be faster than our exact method --- however in general the convergence of the approximate method strongly depends on the dynamics of the networks. In particular, there are reasonable settings where the approximate method fails to find all fixed points, such as of a rank-2, 128 unit RNN with 17 fixed points (trained similarly to the teacher RNN of Fig.~\ref{fig:1}\textbf{c}; Fig.~S\ref{fig:S_FPs}). While an in-depth study of the approximate method is out of scope, we hypothesize that the failure to converge is because when initializing with randomly selected $\mathbf{D}_\Omega$s out of $(D+1)^N$ possible configurations, the approximate method tends to converges to the same set of $\mathbf{D}_\Omega$s.

Our method is completely independent of the dynamics of the system and has a fixed cost, after which one is guaranteed that all fixed points are found. However, we do note that there can be scenarios where our exact method is still too costly. In this scenario, we propose to use the approximate method, with one adjustment - we first pre-compute the subset of $\sum^R_r D^r{N \choose r}$ configurations that can contain fixed points, and then initialize the approximate method using randomly selected $\mathbf{D}_\Omega$ from this subset. Empirically, this leads to better convergence in at least some scenarios (Fig.~\ref{fig:fig_FP}, Fig.~S\ref{fig:S_FPs})

For the approximate method, we used code from \href{https://github.com/DurstewitzLab/CNS-2023}{https://github.com/DurstewitzLab/CNS-2023}, which was released with the GNU General Public License.

\section{Additional details of low-rank RNNs}
\subsection{Discretisation}\label{A:discrete}
Given
\begin{align*}\tau \frac{d\mathbf{z}}{dt}&=-\mathbf{z}(t)+\mathbf{N}^\mathsf{T}\phi(\mathbf{M}\mathbf{z}(t)) +\Gamma_\mathbf{z} \xi(t),
\end{align*}
Using the Euler–Maruyama method with timestep $\Delta_t$:
\begin{align*}
\mathbf{z}_{t+1}=(1-\frac{\Delta_t}{\tau})\mathbf{z}_t+ \frac{\Delta_t}{\tau} \mathbf{N}^{\mathsf{T}}\phi(\mathbf{M}\mathbf{z}_t) +\frac{\sqrt{\Delta_t}}{\tau}\Gamma_\mathbf{z}\epsilon_t,
\end{align*} and with $\epsilon_t \sim\mathcal{N}(0,\mathbf{I})$,
define $a=1-\frac{\Delta_t}{\tau}$, $\tilde{\mathbf{N}}=\frac{\Delta_t}{\tau}\mathbf{N}$, and $\Sigma_\mathbf{z}=\frac{\Delta_t}{\tau^2}\Gamma_\mathbf{z}\Gamma_\mathbf{z}^\mathsf{T}$, we obtain the transition distribution used in our experiments. Note the slight `overloading' of $t$ here, as the discrete time indice $t$ of e.g., $\mathbf{z}_t$ corresponds to the continuous time $\mathbf{z}((t-1)\Delta_t)$. 

\subsection{Conditional generation}\label{A:conditional}
Given input weights $\mathbf{H}\in \mathbb{R}^{N \times N_s}$ and stimulus $\mathbf{s}\in \mathbb{R}^{N_s}$, we define our model as
\begin{align*}\tau \frac{d\mathbf{x}}{dt}=-\mathbf{x}(t)+\mathbf{J}\phi(\mathbf{x}(t)) +\mathbf{H}\mathbf{s}(t) + \xi_\mathbf{x}.\end{align*}
Using the same assumptions as before, $\mathbf{x}$ can be described by $R+N_s$ variables
\begin{align*}\tau \frac{d\mathbf{z}}{dt}&=-\mathbf{z}(t)+\mathbf{N}^\mathsf{T}\phi(\mathbf{M}\mathbf{z}(t)+\mathbf{H}\tilde{\mathbf{s}}(t)) + \xi_\mathbf{z}, \\
\tau \frac{d\tilde{\mathbf{s}}}{dt}&=-\tilde{\mathbf{s}}(t)+\mathbf{s}(t),
\end{align*}
with $\mathbf{x}=\mathbf{M}\mathbf{z}+\mathbf{H}\tilde{\mathbf{s}}$, and  $\begin{bmatrix}
\mathbf{z}\\
\tilde{\mathbf{s}}
\end{bmatrix}=([\mathbf{M},\mathbf{H}]^\mathsf{T}[\mathbf{M},\mathbf{H}])^{-1}[\mathbf{M},\mathbf{H}]^\mathsf{T}\mathbf{x}$. 

We can write the distribution generated after discretization as:
\begin{align*}
p(\mathbf{z}_{1:T},\mathbf{y}_{1:T},\tilde{\mathbf{s}}_{1:T}|\mathbf{s}_{1:T-1})&= p(\mathbf{z}_1) p(\tilde{\mathbf{s}}_1) \prod_{t=2}^T p(\tilde{\mathbf{s}_{t}}|\mathbf{s}_{t-1}) p(\mathbf{z}_t|\tilde{\mathbf{s}}_{t-1},\mathbf{z}_{t-1})\prod_{t=1}^Tp(\mathbf{y}_t|\tilde{\mathbf{s}}_t,\mathbf{z}_t),\\
p(\mathbf{z}_t|\tilde{\mathbf{s}}_{t-1},\mathbf{z}_{t-1})&=\mathcal{N}(F(\tilde{\mathbf{s}}_{t-1},\mathbf{z}_{t-1}),\Sigma_\mathbf{z}),\;\;\;
p(\mathbf{z}_1)=\mathcal{N}(\mu_{\mathbf{z}_1},\Sigma_{\mathbf{z}_1}), \\
p(\tilde{\mathbf{s}}_t|\tilde{\mathbf{s}}_{t-1},\mathbf{s}_{t-1})&=\delta(a\tilde{\mathbf{s}}_{t-1}+(1-a)\mathbf{s}_{t-1}),\;
p(\tilde{\mathbf{s}}_1)=\delta(\mathbf{0}),
\end{align*}
where the mean of the transition distribution is $F(\tilde{\mathbf{s}}_{t},\mathbf{z}_{t})=a\mathbf{z}_t+ \tilde{\mathbf{N}}^{\mathsf{T}}\phi(\mathbf{M}\mathbf{z}_t+\mathbf{H}\tilde{\mathbf{s}}_t)$.

For constant input $\mathbf{s}$,  $\tilde{\mathbf{s}}$ will converge to $\mathbf{s}$, and we can ignore the additional $N_s$ variables, assuming $\mathbf{x}(0)=\mathbf{Mz}(0)+\mathbf{s}$. Similarly if $\mathbf{s}$ varies on a time scale slower than $\tau$, $\mathbf{s}\approx \tilde{\mathbf{s}}$ is a good approximation \cite{Dubreuil2022}. Here, for experiments where the input is a constant context signal (Fig.~\ref{fig:fig5}), we substitute $\mathbf{s}$ for $ \tilde{\mathbf{s}}$ and consider the $R$ dimensional system described by $\mathbf{z}$ (which now has additional conditioning on $\mathbf{s}$):
\begin{align*}
p(\mathbf{z}_{1:T},\mathbf{y}_{1:T}|\mathbf{s}_{1:T})&=p(\mathbf{z}_1)\prod_{t=2}^Tp(\mathbf{z}_t|\mathbf{s}_{t-1},\mathbf{z}_{t-1})\prod_{t=1}^Tp(\mathbf{y}_t|\mathbf{s}_t,\mathbf{z}_t),\\
p(\mathbf{z}_t|\mathbf{s}_{t-1},\mathbf{z}_{t-1})&=\mathcal{N}(F(\mathbf{s}_{t-1},\mathbf{z}_{t-1}),\Sigma_\mathbf{z}),\;
p(\mathbf{z}_1)=\mathcal{N}(\mu_{\mathbf{z}_1},\Sigma_{\mathbf{z}_1}),
\end{align*}
where $F(\mathbf{s}_{t},\mathbf{z}_{t})=a\mathbf{z}_t+ \tilde{\mathbf{N}}^{\mathsf{T}}\phi(\mathbf{M}\mathbf{z}_t+\mathbf{Hs}_t)$.

\subsection{Linear transformations of the latent space and orthogonalisation}
Given
\begin{align*}\mathbf{x}_{t+1}&=a\mathbf{x}_t+\mathbf{M}\tilde{\mathbf{N}}^\mathsf{T}\phi(\mathbf{x}_{t})+\epsilon_\mathbf{x}\\
\mathbf{z}_{t+1}&=a\mathbf{z}_t+\tilde{\mathbf{N}}^\mathsf{T}\phi(\mathbf{Mz}_t)+\epsilon_\mathbf{z}
\end{align*} with $\epsilon_\mathbf{z}\sim\mathcal{N}(0,\Sigma_\mathbf{z})$, $\epsilon_\mathbf{x}\sim\mathcal{N}(0,\mathbf{M}\Sigma_{\mathbf{z}}\mathbf{M}^\mathsf{T}).$
We can do any linear transformation of the latent dynamics $\mathbf{z}$:  $\hat{\mathbf{z}}=\mathbf{Az}$, as long as $\mathbf{A}$ has rank $R$, without changing the neuron activity $\mathbf{x}$. To see this, define $\hat{\mathbf{M}} = \mathbf{M}\mathbf{A}^{-1}$, $\hat{\mathbf{N}} = \mathbf{A}\tilde{\mathbf{N}}$, and $\epsilon_{\hat{\mathbf{z}}} \sim \mathcal{N}(0,\mathbf{A}\Sigma_\mathbf{z}\mathbf{A}^T)$, giving us:
\begin{align*}\mathbf{x}_{t+1}&=a\mathbf{x}_t+\hat{\mathbf{M}}\hat{\mathbf{N}}^\mathsf{T}\phi(\mathbf{x}_{t})+\epsilon_{\mathbf{x}}\\
\hat{\mathbf{z}}_{t+1}&=a\hat{\mathbf{z}}_t+\hat{\mathbf{N}}^\mathsf{T}\phi(\hat{\mathbf{M}}\hat{\mathbf{z}}_t)+\epsilon_{\hat{\mathbf{z}}},\\\end{align*}

which will leave $\mathbf{x}$ unchanged, while our latents $\mathbf{z}$ are expressed in a new basis. We typically got a more interpretable visualization of the latents by orthonormalising the columns of $\mathbf{M}$. Thus we applied for all visualisations after training $\mathbf{A}=\mathbf{U}^\mathsf{T}\mathbf{M}$, with $\hat{\mathbf{M}}=\mathbf{U}$, where $\mathbf{U}$ are the first $R$ left singular vectors of $\mathbf{J}=\mathbf{MN}^\mathsf{T}$.

\section{Details of empirical experiments}
\subsection{Training details}
\subsubsection{Initialisation}
Our models are (unless noted otherwise) initialized as follows:

\begin{align*}
\tilde{\mathbf{N}}_{ij}&\sim \mathcal{U}_{[-\frac{1}{\sqrt{N}},\frac{1}{\sqrt{N}}]},\\
\mathbf{M}_{ij}&\sim \mathcal{U}_{[-\frac{1}{\sqrt{R}},\frac{1}{\sqrt{R}}]}, \\
\mathbf{W}_{ij}&\sim \mathcal{N}(0,\frac{2}{R}),\\
\mathbf{H}_{ij}&\sim \mathcal{U}_{[-\frac{1}{\sqrt{N_{inp}}},\frac{1}{\sqrt{N_{inp}}}]},\\
\mathbf{h}_i&\sim\mathcal{U}_{[-\frac{1}{\sqrt{N}},\frac{1}{\sqrt{N}}]},\\
\mathbf{b}&\leftarrow\mathbf{0},\\
a&\leftarrow.9,\\
\Sigma_\mathbf{z}&\leftarrow.01\mathbf{I},\\
\Sigma_{\mathbf{z}_1}&\leftarrow\mathbf{I},\\
\mu_{\mathbf{z}_1}&\leftarrow\mathbf{0},
\end{align*}where $\mathbf{W}$ and $\mathbf{b}$ are the output weights and biases respectively. For Gaussian observations we initialise $\Sigma_\mathbf{y}\leftarrow.01\mathbf{I}$. 

For experiments with Poisson observations, we jointly optimized a (usually causal) CNN encoder as part of the proposal distribution. The CNN was conditioned on observations and predicted the mean and $\log$ variance of a normal distribution. It consisted of common initial layers consisting of 1D convolutions, with a GeLU activation function, and a separate output convolution for the predicted mean and ($\log$) variance. The CNN was initialized to the Pytorch \cite{Pytorch2019} defaults, except for the bias of the $\log$ variance output layer, to which we added a $\log(.01)$ term, such that the output matches the initially predicted variance of the RNN. The exact number of layers and channels are reported in the sections for each experiment. 

For the teacher-student setups, we used as non-linearity $\phi(\mathbf{x}_i)=\mathsf{max}(\mathbf{x}_i-\mathbf{h}_i,0)$ for both the students and the teachers, and for all experiments with real-world data, we used the `clipped' $\phi(\mathbf{x}_i)=\mathsf{max}(\mathbf{x}_i+\mathbf{h}_i,0)-\mathsf{max}(\mathbf{x}_i,0)$ \cite{Hess2023}.

\subsubsection{Parameterisation}

We constrain $a$ to be between $0$ and $1$ by instead optimising $\tilde{a}$ with the following (sigmoidal) parameterisation $a=\exp(-\exp(\tilde{a}))$ \cite{orvieto2023}. In experiments with the optimal proposal, we estimate the full $\Sigma_\mathbf{z}$, which we constrain to be symmetric positive definite, by optimizing a lower triangular matrix $\mathbf{C}$ such that $\Sigma_\mathbf{z}=\mathbf{C}\mathbf{C}^T$, where we additionally constrain the diagonal of $\mathbf{C}$ to be positive using $\mathbf{C}_{ii} = \exp(\tilde{\mathbf{C}}_{ii}/2)$. For all diagonal covariances, we parameterise the diagonal elements using $\Sigma_{ii} = \exp(\tilde{\Sigma}_{ii})$. For Poisson observations, we apply a $\mathsf{Softplus}$ function to rectify the predicted rate. 

\subsubsection{Optimisation}

During training we minimise the variational SMC $\mathsf{ELBO}$ \cite{Le2018,Maddison2017,Naesseth2018} (Eq.\ref{eq:ELBO}) with stochastic gradient descent, using the RAdam \cite{Liu2021} optimiser in Pytorch \cite{Pytorch2019}. We generally use an exponentially decaying learning rate (details under each experiment).

\subsection{Teacher student experiments}

\subsubsection{Dataset description}
We created datasets by first training `teacher' RNNs to perform a task and then generating observations by simulating the trained teacher RNNs.

For Fig.~\ref{fig:1}\textbf{a}, \textbf{b} we used code from \cite{Pals2024} (\href{https://github.com/mackelab/phase-limit-cycle-RNNs}{https://github.com/mackelab/phase-limit-cycle-RNNs}, Apache licence) to train rank-2 RNNs to produce oscillations, using a sine-wave with a periodicity of 50 time-steps as a target and an additional L2 regularisation on the rates. After training, we extracted the recurrent weights $\mathbf{M},\mathbf{N}$ and biases $\mathbf{h}$, orthonormalized the columns of $\mathbf{M}$, and created a dataset by simulating the model for 75 timesteps, with $\Sigma_\mathbf{z} =.04\mathbf{I}$. For Fig.~\ref{fig:1}\textbf{a} we used $N=20$ units and generated observations according to $G=\mathcal{N}(\mathbf{Mz}_t,\Sigma_\mathbf{y})$, with $\Sigma_\mathbf{y}=.01\mathbf{I}$. Fig.~\ref{fig:1}\textbf{b} we used $N=40$ units and generated observations according to $G=\mathsf{Pois}(\mathsf{Softplus}(w\mathbf{Mz}_t-b)$, with $w=4$ and $b=3$.

For Fig.~\ref{fig:1}\textbf{c}, we followed a similar procedure but now trained the teacher RNN on a task where it has to use input. After an initial period of $25$ time steps, a stimulus was presented for $25$ timesteps consisting of $[\sin(\theta)$,$\cos(\theta)]^\mathsf{T}$, where $\theta$ was randomly selected every trial out of $8$ fixed angles. The RNN was tasked to produce output that equals the transient stimulus for the next 100 time-steps. Here we used $N=60$ units, $\Sigma_\mathbf{z} =.0025\mathbf{I}$ and generated observations according to $G=\mathcal{N}(\mathbf{Mz}_t,\Sigma_\mathbf{y})$, with $\Sigma_\mathbf{y}=.0025\mathbf{I}$. The training data for the student RNN was included for each trial the corresponding stimulus.

\subsubsection{Training details}
The `student' RNNs had $20,40,60$ units, respectively and rank $R=2$, matching that of the teacher RNNs. For Fig.~\ref{fig:1}\textbf{a}, \textbf{c}. The observation model was a linear Gaussian according to $G=\mathcal{N}(\mathbf{Mz}_t,\Sigma_\mathbf{y})$, and we used the optimal proposal distribution. For Fig.~\ref{fig:1}\textbf{b} we used $G=\mathsf{Pois}(\mathsf{Softplus}(\mathbf{W}\mathbf{Mz}_t-\mathbf{b}))$, with $\mathbf{W}$ a diagonal matrix (scaling the output of each unit individually). For Fig.~\ref{fig:1}\textbf{b}, we used a causal CNN encoder as part of the proposal distribution. It consisted of 3 layers, with kernel sizes $(21,11,1)$, and channels $(64,64,2)$. We used (causal) circular padding.

For all three experiments, we used $k=64$ particles, batch-sizes of $10$, and decreased the learning rate exponentially from $10^{-3}$ to $10^{-5}$. For Fig.~\ref{fig:1}\textbf{a} we trained for 1000 epochs of 200 trials, for Fig.~\ref{fig:1}\textbf{b} for 1500 epochs of 400 trials and for Fig.~\ref{fig:1}\textbf{c} for 200 epochs of 800 trials. We used a workstation with a NVIDIA GeForce RTX 3090 GPU for these runs. One model took about 3 to 4 hours to finish training.

\subsubsection{Evaluation setup}
For Fig.~\ref{fig:1} we generated long trajectories of $T=10000$ time-steps of data for both the student and teacher RNNs. To facilitate visual comparisons between student and teacher dynamics, we also orthonormalized the columns of the students weights $\mathbf{M}$ after training, and for Fig.~\ref{fig:1}\textbf{a,c} picked signs of the columns of $\mathbf{M}$ such that the student and teacher match (note that after orthormalizing, the columns of $\mathbf{M}$ are equal to the non-zero singular vectors of the full weight matrix $\mathbf{J}$, which are only unique up to a sign flip). As noted before, this leaves the output of the model unchanged. The autocorrelation in Fig.~\ref{fig:1}\textbf{a} was computed by convolving a sequence of lag$=120$ steps of data with itself (with duration $2\times$lag), and normalising such that lag=0 corresponds to a correlation of 1. We repeated this for 80 sequences starting at different time-points of the whole trajectory.

\subsection{EEG data}
\subsubsection{Dataset description}
We used openly accessible electroencephalogram (EEG) data from \cite{Schalk2004,Moody2000} ( \href{https://www.physionet.org/content/eegmmidb/1.0.0/}{https://www.physionet.org/content/eegmmidb/1.0.0/}, ODC-BY licence). The data was recorded from a human subject sitting still with eyes open (session S001R01), and was sampled at 160 Hz. Like \cite{Hess2023}, we used the full 1 minute of recording, but unlike \cite{Hess2023}, we did not smooth the data (but just standardized the data). Thus, to compare our performance to \cite{Hess2023}, who ran their evaluation using the smoothed data, we smoothed our generated samples equivalently, using a Hann filter with a window length of 15-time bins, so that we can also compare our samples to the smoothed data.

\subsubsection{Training details}
We used $N=512$ units, and rank $R=3$. The observation model was a linear Gaussian conditioned on the hidden state and we used the optimal proposal distribution. We trained for 1000 epochs consisting of 50 batches of size of 10, and $k=10$ particles. The learning rate was decreased exponentially from $10^{-3}$ to $10^{-6}$.
Models were trained using NVIDIA RTX 2080 TI GPUs on a compute cluster. A single model took between 4 and 5 hours to finish training. 

\subsubsection{Evaluation setup}\label{EEG_Eval}

We used our RNN to generate one long trajectory of $T=9760$ steps of data, $\mathbf{y}_t$ (after discarding the first 2440 steps), which we compare to the EEG data, $\hat{\mathbf{y}}_t$, using two evaluation measures from \cite{Hess2023,Brenner2022} (using code from \href{https://github.com/DurstewitzLab/GTF-shPLRNN}{https://github.com/DurstewitzLab/GTF-shPLRNN}, GNU General Public License):

$\mathbf{D_\mathsf{stsp}}$: This is an estimate of the $\mathsf{KL}$ divergence between the ground truth and generated states. To compute this, we obtained kernel density estimates of the probability density functions (over states, not time), using a Gaussian kernel with standard deviation $\sigma=1$. We get for the EEG data: $\hat{p}(\mathbf{y})=\frac{1}{T}\sum_{t=1}^T\mathcal{N}(\hat{\mathbf{y}}_t,\mathbf{I})$, and for the generated data $\hat{q}(\mathbf{y})=\frac{1}{T} \sum_{t=1}^T\mathcal{N}(\mathbf{y}_t,\mathbf{I})$. We then used the following Monte Carlo estimate of the $\mathsf{KL}$ divergence: $D_\mathsf{stsp} \approx \frac{1}{n}\sum \log \frac{\hat{p}(\hat{\mathbf{y}}^{i})}{\hat{q}(\hat{\mathbf{y}}^{i})}$, using $n= 1000 $ samples $\hat{\mathbf{y}}^{i}$ drawn randomly from the EEG data.

$\mathbf{D_H}$: This is an estimate of the difference in power spectra between the ground truth and generated states. We first computed for each data dimension the spectra $\hat{\mathbf{y}}^i_\omega$, $\mathbf{y}^i_\omega$ for the EEG and generated data, respectively. We used a Fast Fourier Transform, smoothed the estimates with a Gaussian kernel with standard deviation $\sigma=20$, and normalized the spectra so they sum to 1. We computed the mean of the Hellinger distances between the spectra:
$D_H=\frac{1}{64}\sum^{64}_i\frac{1}{\sqrt{2}}\lVert \sqrt{\hat{\mathbf{y}}^i_\omega}-\sqrt{\mathbf{y}^i_\omega} \rVert$.

\subsection{Hippocampus HC-2}
\subsubsection{Dataset description}
We used openly accessible neurophysiological data recorded from layer CA1 of the right dorsal hippocampus \cite{Mizuseki2009,Mizuseki2009b} (\href{https://crcns.org/data-sets/hc/hc-2/about-hc-2}{https://crcns.org/data-sets/hc/hc-2/about-hc-2}. Signals were recorded as the rats engaged in an open field task, chasing drops of water or pieces of food that were randomly placed. We used the session ec013.527 from rat ID ec13, which is approximately 1062 seconds long. From 37 units (neurons) we used 21 neurons that have maximal spike counts, discarding the rest of the comparatively silent neurons. We binned the spike data to 10ms. We used the first 80 percent of the data for training, and the rest was saved for testing purposes.
\subsubsection{Training details}
We used $N$ = 512 units, and rank $R$ = 3 for the run that was used in our Fig.~\ref{fig:hpc2}. We used a causal CNN encoder as part of the proposal distribution, which consisted of 3 layers with kernel sizes (150, 11, 1), with (64, 64, 3) channels. During our study, we swept over multiple ranks and found that theta oscillations consistently emerged from rank 3 onwards, after which reconstruction accuracy was relatively stable. For each rank, we used three different seeds and two different first layer sizes for the encoder, 25 or 150. The duration of a randomly sampled trial (sequence length) from the whole data was 94 time steps when the first layer size was 25, and 219 when the first layer size was 150. We, however, also found that the choice of the duration did not affect the results much. We trained the model using 3000 epochs, each epoch consisting of 3000 trials with 64 batches and $k$ = 64 particles. The learning rate was decreased exponentially from $10^{-3}$ to $10^{-6}$. A single model took approximately 21 hours to finish training on a NVIDIA RTX 2080 TI GPU on a compute cluster.
  
\subsubsection{Evaluation setup}
We used our RNN to generate data that matches the duration of the test data, which is 20810 time steps ($\sim$208 s) (after discarding the first 1000 steps). We compare different spike statistics of generated data with test data, and for comparison purposes, we also compared the same statistics measurements between train and test data as well. We calculated the mean firing rate of each neuron, mean of ISI distributions, and pairwise correlations. We used a band-pass filter 1-40 Hz for the latents and the LFP signal before calculating the powerspectrogram (Fig.\ref{fig:hpc2}\textbf{e}).

\subsection{Hippocampus HC-11}
\subsubsection{Dataset description}
We used openly accessible neurophysiological data recorded from hippocampal CA1 region \cite{Grosmark2016,Chen2016,Grosmark2016b} (\href{https://crcns.org/data-sets/hc/hc-11/about-hc-11}{https://crcns.org/data-sets/hc/hc-11/about-hc-11}). We used the subset of the dataset called the \textit{maze} epoch, where a rat was running on a 1.6-meter linear track, with rewards located at each end (left and right). Throughout this task, neural activity was recorded from 120 identified pyramidal neurons. As in \cite{Brenner2024}, we only used 60 neurons that had sufficient activity and discarded rest of the units. We used code from \cite{zhou2020} (\href{https://github.com/zhd96/pi-vae}{https://github.com/zhd96/pi-vae}) to preprocess the spike data, and only use data corresponding to the rat running and the location data being available for the results shown in Fig.~\ref{fig:fig4} and Fig.~S\ref{fig:S_hpc11_supp}. For the model shown in Fig.~S\ref{fig:S_hpc11_supp_full} we used the last 1350s of data of the Maze epoch, which also includes bouts where the rat is stationary. We used 25ms bins. %But contrary to their analysis, we did not separate the data into trials and treated the data as a whole. LFP data is also sub-sampled whenever the location data was present as well. 
\subsubsection{Training details}
We used $N$ = 512 units, and rank $R$ = 4 for Fig.~\ref{fig:fig4} and Fig.~S\ref{fig:S_hpc11_supp} and rank $R$ = 12 for Fig.~S\ref{fig:S_hpc11_supp_full}. We used a causal CNN encoder with zero padding with 3 layers (24, 11, 1), (64, 64, 4) channels, and 3 layers (150, 11, 1), (128, 64, 12) channels, respectively. The models were trained for 3000 epochs, each epoch having 3000 trials with a sequence length of 94 time bins (2.35 s) and 219 bins (5.48 s), respectively. We used batch sizes of 64 and $k$ = 64 particles. The learning rate was decreased exponentially from $10^{-3}$ to $10^{-6}$. A single model took approximately 21 hours to finish training on a NVIDIA RTX 2080 TI GPU on a compute cluster.
\subsubsection{Evaluation setup} We used our RNN to generate data that matches the duration of the test data, which is 4289 time steps ($\sim$107 s) (after discarding the first 1000 steps) for Fig.~\ref{fig:fig4} and Fig.~S\ref{fig:S_hpc11_supp} and 16539 time steps ($\sim$413 s) for Fig.~S\ref{fig:S_hpc11_supp_full}. We calculated the mean firing rate of each neuron, coefficient of variations of ISI distributions, and pairwise correlations. For the $R^2$ reported in the Main text, we fit a ridge regression model to the posterior latent variables on the training data, after smoothing with a Hann window of size 100, and apply the regression model to latents inferred from the test data.

\subsection{Monkey Reach}
\subsubsection{Dataset description}\label{A:reach_dataset}
We used the publicly available \texttt{MC\_Maze} dataset from the Neural Latents Benchmark (NLB) \cite{PeiYe2021NeuralLatents} (\href{https://dandiarchive.org/dandiset/000128}{https://dandiarchive.org/dandiset/000128}, CC-BY-4.0 licence). The data were recorded from a macaque performing a delayed center-out reaching task with barriers, resulting in a variety of straight and curved reaches. For simplicity, we took only the trials with no barriers and thus straight reach trajectories, resulting in 592 training trials and 197 test trials. We binned the data at 20 ms and aligned each trial from 250 ms before to 450 ms after movement onset. 

To create conditioning inputs for the model, we took the x and y coordinates of the target position for each trial and scaled them to be between $-1$ and $1$. We then provide this scaled target position as constant context input to the RNN for the duration of the trial.

\subsubsection{Training details}
We ran a random search of 30 different models with rank $r \in {3, 4, 5, 6}$ and particle number $k \in {16, 32, 64}$. All models had $512$ units and used a causal CNN encoder with kernel sizes $(14, 4, 2)$ and channels $(128, 64, r)$. We used (causal) reflect padding. We trained each model for up to 2000 epochs, terminating training early if no improvement was seen for 50 epochs. Each model took around $3$ to $4$ hours to train on an NVIDIA RTX 2080 TI GPU on a compute cluster. Seeing that a rank of 5 was sufficient for velocity decoding $R^2 \approx 0.9$, we took the best-performing rank-5 model for subsequent analyses.

\subsubsection{Evaluation setup}
For qualitative evaluation of replication of cross-condition differences, we grouped the reach targets in the data into 7 conditions, one at each corner and the midpoint of each edge of the rectangular reach plane, excluding the midpoint directly at the bottom. We then generated data from the model RNN using conditioning inputs from the test trials of the real data. Then, for the test data and the model-generated data, we computed mean firing rate and inter-spike interval for each neuron for each condition. We then computed correlation distance ($1 - r$, where $r$ is the Pearson correlation coefficient) on the neuron statistics between conditions in the test data and model-generated data.

For generation of data for Fig. \ref{fig:fig5}\textbf{d}, \textbf{e}, we selected target locations by choosing angles from $0$ to $360^{\circ}$, evenly spaced by $22.5^{\circ}$, and determined the corresponding reach endpoint on a square spanning from $(-1, -1)$ to $(1, 1)$. We then constructed conditioning inputs similar to the real data using these target locations and simulated the RNN with them. To decode the reaches, we used a linear decoder trained from inferred firing rates to reach velocity from the real data.

\subsection{Neural Latents Benchmark}
\subsubsection{Dataset description}
We again used the publicly available \texttt{MC\_Maze} dataset from NLB (see Supplement \ref{A:reach_dataset}). We resampled the data to 20 ms bin size and followed the standard data preprocessing procedures for the benchmark, as described in \cite{PeiYe2021NeuralLatents}.

\subsubsection{Training details}
We ran a random search of 30 different models with varying rank from 12 to 40 and particle number $k \in {16, 32, 64}$. All models had $512$ units and used a used a CNN encoder with kernel sizes $(14, 4, 2)$ and channels $(128, 64, 36)$, either with causal reflect padding or acausal zero padding. We trained each model for up to 2000 epochs, terminating training early if no improvement was seen for 50 epochs. Each model took around $10$ to $12$ hours to train on an NVIDIA RTX 2080 TI GPU on a compute cluster.

Because the primary task of the benchmark is co-smoothing, i.e., prediction of held-out neuron firing rates from held-in neurons, we provide the encoder with only the activity of held-in neurons. However, the observation likelihood component of the ELBO is computed on all neurons, held-in and held-out.

After training, we selected the model with the best co-smoothing score on the validation split and submitted its predictions to the benchmark for the final evaluation.

\subsubsection{Evaluation setup}
Automated evaluation was performed on the benchmark platform, as described in \cite{PeiYe2021NeuralLatents}.

We used for the prediction at timestep $t$, the expected Poisson rate of held-out neurons, conditioned on the activity of held-in neurons at the current and previous timesteps, by making use of the filtering Posterior (Eq.~\ref{eq:filt_post}). We averaged over $32$ sets of trajectories with $192$ particles each. 

\ifSubfilesClassLoaded{
    \small
    \bibliographystyle{unsrtnat}
    \bibliography{neurips_bib}
}{}

\end{document}

\end{document}